\newtheorem{theorem}{Theorem}
\useunder{\uline}{\ul}{}
\useunder{\uline}{\ul}{}
\newif\iftaclinstructions
\newcommand{\instr}
\title{CAuSE: Decoding Multimodal Classifiers using Faithful Natural Language Explanation}
\newcommand\tab[1][0.7cm]{\hspace*{#1}}
\newcommand{\affa}{{$^{1}$}}
\newcommand{\affb}{{$^{2}$}}
\author{
  {\bf Dibyanayan Bandyopadhyay}\affa \tab 
  {\bf Soham Bhattacharjee}\affa \tab
  {\bf Mohammed Hasanuzzaman}\affb \\
  {\bf Asif Ekbal}\affa \\
  \affa Indian Institute of Technology Patna \tab 
  \affb Queen's University Belfast \\
  \affa \texttt{\{dibyanayan, sohambhattacharjeenghss, asif.ekbal\}@gmail.com} \\
  \affb \texttt{m.hasanuzzaman@qub.ac.uk}
}
\begin{document}
\maketitle
\begin{abstract}

     Multimodal classifiers function as opaque black box models. While several techniques exist to interpret their predictions, very few of them are as intuitive and accessible as natural language explanations (NLEs). To build trust, such explanations must faithfully capture the classifier’s internal decision making behavior, a property known as \emph{faithfulness}. In this paper, we propose \emph{CAuSE} (Causal Abstraction under Simulated Explanations), a novel framework to generate faithful NLEs for any pretrained multimodal classifier. We demonstrate that \emph{CAuSE} generalizes across datasets and models through extensive empirical evaluations. Theoretically, we show that \emph{CAuSE}, trained via interchange intervention, forms a causal abstraction of the underlying classifier. We further validate this through a redesigned metric for measuring causal faithfulness in multimodal settings.  \emph{CAuSE} surpasses other methods on this metric, with qualitative analysis reinforcing its advantages. We perform detailed error analysis to pinpoint the failure cases of \emph{CAuSE}. For replicability, we make the codes 
     available at \url{https://github.com/newcodevelop/CAuSE}. 

\end{abstract}

\section{Introduction}
Multimodal classifiers (e.g. VisualBERT~\cite{li2019visualbert}) integrate information from multiple modalities, such as images, text, and audio, and classify input into a predefined set of classes. These models are vital in a range of applications. For instance, given radiology reports in free-text format and corresponding chest X-ray images, a multimodal classifier can be trained to predict whether a patient has COVID-19~\citep{baltrušaitis2017multimodalmachinelearningsurvey}.

However, their widespread adoption depends on whether we can trust their predictions. This requires the development of interpretability techniques that explain \textit{how} the classifier came to its prediction. Input attribution methods aim to identify features or concepts in the input that influence the classifier’s decision. Although these techniques provide useful insights, they face two significant limitations: (i) \emph{Lack of natural language explanations (NLEs)}. Their outputs are typically low-level and not conveyed in natural language, which hampers interpretability~\citep{sundararajan2017axiomaticattributiondeepnetworks}; and (ii) \emph{Causal faithfulness}. They often do not capture a true causal link between the input and the prediction of the model~\citep{bandyopadhyay-etal-2024-seeing,chattopadhyay2019neural}. Crucially, faithfulness is essential to trust the predictions of a model.

To address these limitations, we propose \textbf{CAuSE (Causal Abstraction under Simulated Explanations)}, a novel post-hoc framework for generating \emph{causally faithful} NLEs for any \emph{frozen} multimodal classifier. CAuSE specifically targets \emph{discriminative} classifiers with an encoder-classification head architecture, which are found to have been deployed in today's production systems \citep{megahed2025adapting,JI2025102794}. Unlike generative models, these discriminative classifiers lack native explanation generation capabilities, necessitating post-hoc interpretability frameworks like CAuSE. Figure~\ref{fig:basic} shows an example of CAuSE explaining a \emph{frozen} multimodal offensiveness classifier decision for a meme.

At the core of CAuSE is a pretrained language model $\phi$, which is guided by the hidden states of the classifier to produce natural language explanations. CAuSE uses a novel loss function, grounded in interchange intervention training~\citep{DBLP:journals/corr/abs-2106-02997}, that enforces causal faithfulness of the generated explanations (see Section~\ref{res-all} for results
and analysis). We also propose a variant of a widely used causal faithfulness metric~\citep{atanasova-etal-2023-faithfulness}, termed \textbf{CCMR (Counterfactual Consistency via Multimodal Representation)}, tailored for evaluating faithfulness of NLEs in multimodal contexts. Under this metric, CAuSE demonstrates strong performance on benchmark datasets such as e-SNLI-VE~\citep{do2021esnlivecorrectedvisualtextualentailment}, which is a dataset of image premises and text hypotheses labeled with entailment, contradiction, or neutral labels; Facebook Hateful Memes~\citep{kiela2021hateful} in which the task is to predict whether an input meme is offensive or not; and VQA-X \cite{Park_2018_CVPR}, which is a visual question answering dataset coupled with gold explanations.

We conduct extensive qualitative analyses to examine: (i) where and how CAuSE succeeds in producing causally faithful NLEs (\S \ref{llm-as-judge} and \S \ref{quals-disc}), (ii) typical failure cases (\S \ref{failure-mode}), and (iii) general trends observed in error analysis (\S \ref{error-analysis}).

Our contributions are \textit{three-fold}: (1) a framework for generating faithful, post-hoc NLEs for multimodal classifiers; (2) a novel loss function that enforces causal faithfulness; and (3) an extensive empirical evaluation demonstrating the effectiveness of CAuSE in producing causally faithful NLEs.

\subsection{Scope and Applicability}

CAuSE serves dual purposes as both a deployable tool and a methodological framework with broader implications. 

\textbf{Immediate application.} As explained in the Introduction, CAuSE targets discriminative multimodal classifiers, which, unlike generative models, lack native explanation generation capabilities. Simulating these classifiers with separate generative models often produces plausible but unfaithful explanations that fail to reflect actual model reasoning \cite{madsen2024selfexplanations, turpin2023language}. CAuSE addresses this through faithful explanation generation, demonstrated across various classifiers studied later (\S Section~\ref{res-all}).

\textbf{Broader implications.} Beyond practical deployment, CAuSE demonstrates how causal abstraction principles that are typically validated on simpler networks can be scaled to complex transformer-based architectures with an application to faithful explanation generation that is empirically shown to be both task-agnostic and largely architecture-agnostic. This positions CAuSE as a blueprint for future explainability research requiring causal faithfulness.

\begin{figure}
    \centering
    \includegraphics[width=\columnwidth]{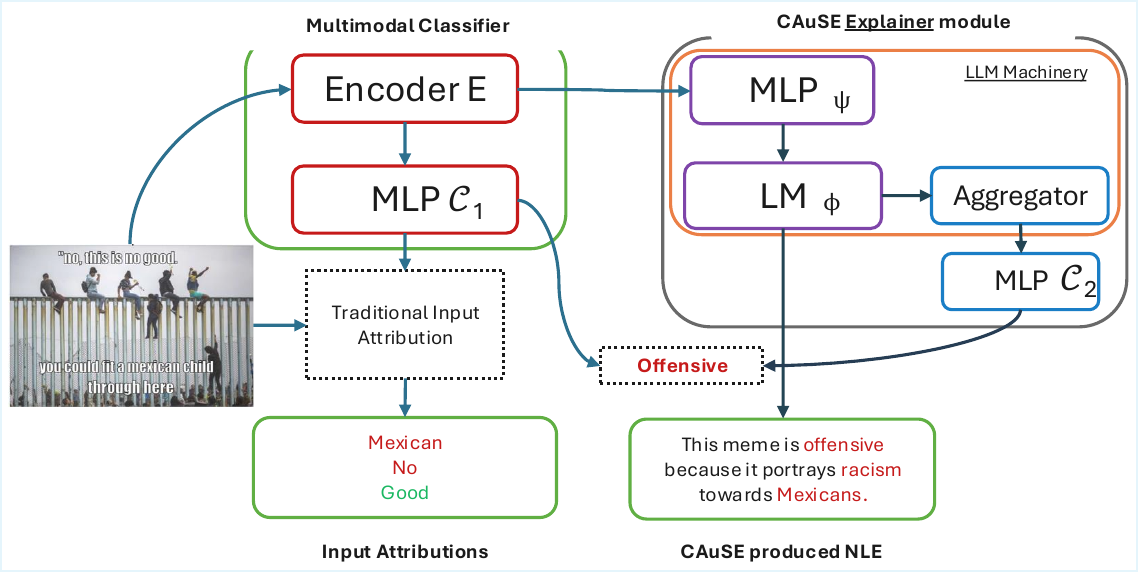}
    \caption{This figure shows the abstract schematic of CAuSE and how it explains \emph{discriminative} multimodal classifiers at inference. The internal components of CAuSE are i) an MLP $\psi$, ii) A language model (LM) $\phi$, and iii) an aggregator followed by a classifier $\mathcal{C}_2$. \emph{The input to the multimodal encoder is a meme which is composed of both text and image (separately not shown).}}
    \label{fig:basic}
\end{figure}

\section{Causal Abstraction and Interchange Intervention}\label{caii}

\subsection{Causal Abstraction}\label{ca}
In \citet{DBLP:journals/corr/abs-2106-02997}, the authors introduced the concept of causal abstraction for neural models. They define a neural network, $N_1$, as a causal abstraction of a higher-level causal model, $N_2$, if the neural representations of $N_1$ exhibit the same causal outcomes as the corresponding high-level variables in $N_2$. This alignment is achieved through the Interchange Intervention Training (IIT) objective.

This type of alignment learned through IIT is referred to as \textit{causal abstraction} in the literature. IIT ensures a systematic correspondence between interventions on the neurons in $N_1$ and the mapped variables in $N_2$. Unlike a traditional teacher-student training objective, which merely teaches the student to mimic the teacher's output, causal abstraction ensures that the student model internally mirrors the teacher's decision-making process. In particular, IIT guarantees that interventions on variables in $N_2$ yield analogous effects when applied to the associated neurons in $N_1$, thereby aligning the causal structure of the two models.

\subsection{Interchange Intervention}\label{ii}

To perform an interchange intervention, we begin by passing a "base" input $b$ through the network $N_1$, randomly selecting a neuron $i_1 \in N_1$ and freezing its activation at the value $i_1(b)$. Simultaneously, a second input, referred to as the "source" input $s$, is also passed through $N_1$, but with the activation of neuron $i_1$ held fixed at $i_1(b)$ instead of its native activation under $s$. The resulting output from this modified forward pass is denoted by $y_{i_1}^{INT}(N_1)$.

Assuming a one-to-one correspondence between each neuron $i_1$ in $N_1$ and a subset of variables $S(i_1)$ in a higher-level model $N_2$, we say that $N_2$ is a causal abstraction of $N_1$ if the following condition holds:
$$S(i_1) = \{i_2 \in N_2: [y_{i_2}^{INT} (N_2) = y_{i_1}^{INT} (N_1)]\}$$
In other words, after intervention in $i_1$ and $i_2$, output of $N_1$ must match that of $N_2$, under the same interchange intervention.

Let $P_{y_{i_1}^{INT}(N_1)}$ and $P_{y_{i_2}^{INT}(N_2)}$ denote the probability distributions over the model outputs under interchange intervention for $N_1$ and $N_2$, respectively. Following \citet{DBLP:journals/corr/abs-2106-02997}, causal abstraction is encouraged by minimizing the loss of interchange intervention training, defined as the Kullback–Leibler divergence between these two distributions.
\begin{equation}\label{iitloss}
\mathcal{L}_{IIT} = D_{KL}(P_{y_{i_1}^{INT}(N_1)}| P_{y_{i_2}^{INT}(N_2)})
\end{equation}
where $D_{KL}(P|Q)$ denotes the KL divergence between distributions $P$ and $Q$.

\section{Methodology}

\subsection{Setup and Objective}
The goal of CAuSE is to provide post hoc natural language explanations for the predictions of a pretrained multimodal classifier $M$.

Any multimodal classifier $M$ is composed of an encoder $E$ and a multilayered perceptron (MLP) $\mathcal{C}_1$. \emph{This model is pretrained and fixed; our method does not alter or retrain it.}

Given an input pair $(t, v)$ of text and image, $E$ computes a joint representation $c = E(t,v) \in \mathbb{R}^{m \times 1}$, which is then passed to $\mathcal{C}_1$ to obtain a logit vector $z \in \mathbb{R}^{L}$ over $L$ classes. The output prediction is $y_1 = \text{softmax}(z)$.

\subsection{Explainer Module (CAuSE Framework)}
\emph{CAuSE} is a framework consisting of an explanation generation module, referred to as the \textbf{Explainer}, along with a set of loss functions used to train it. The Explainer is composed of four key components:
(i) a multi-layer perceptron (MLP) $\psi$;
(ii) a language model $\phi$ responsible for generating natural language explanations (NLEs);
(iii) an aggregator module $\mathcal{A}$, which processes and transforms the token-level outputs of $\phi$ into a fixed-length feature representation; and
(iv) a classifier $\mathcal{C}_2$, which is structurally identical to the original classifier $\mathcal{C}_1$ and is trained to replicate its predicted label $y_1$.

\paragraph{Training Objective for $\phi$}

To generate faithful explanations grounded in the classifier $M$'s internal reasoning, we condition the language model $\phi$ on its hidden representation $c = E(z)$. Since $c$ may not align with $\phi$’s input embedding space, we project it using a simple MLP $\psi$, and use $\psi(c)$ as the embedding for the initial \textbf{BOS} token. The model $\phi$ is then fine-tuned using the standard causal language modeling objective:
\begin{equation}\label{clm}
\mathcal{L}_{\phi} = -\sum_{i=1}^{T} \log P_{\phi}(x_i | x_{i-1}, \ldots, x_0)
\end{equation}

We use GPT-2 (small) \cite{radford2019language} as $\phi$. We fine-tune it (rather than training from scratch) on plausible human-annotated explanations from e-SNLI-VE, Hateful Memes, and VQA-X datasets, as oracle explanations for the classifier's decisions are unavailable. Directly conditioning on raw input $z$ leads to unfaithful outputs, as pretrained LLMs cannot inherently incorporate internal model states. As shown in Table~\ref{tab:baselines}, naïve fine-tuning on $(z, \text{explanation})$ pairs results in poor faithfulness. Our approach overcomes this by explicitly grounding generation in $c = E(z)$.

We denote the full explanation generation pipeline ($F = \mathcal{A} \circ \phi \circ \psi$), as the \textit{LLM machinery}. Thus, \emph{Explainer} is composed of both \emph{LLM Machinery} and $\mathcal{C}_2$ ($\mathcal{C}_2 \circ F$).

\paragraph{Aggregator $\mathcal{A}$}
The logit output of $\phi$ is a tensor of shape $(1, T, V)$, where $V$ is the vocabulary size and $T$ is the sequence length. We first sum over the time axis to obtain a vector $x \in \mathbb{R}^{V}$. This is then transformed via a feed-forward network into a vector of size $\mathbb{R}^{m\times1}$ to match the input dimension expected by $\mathcal{C}_2$.

\paragraph{Classifier $\mathcal{C}_2$}
$\mathcal{C}_2$ is a replica of $\mathcal{C}_1$, receiving the previously aggregated explanation  representation from $\mathcal{A}$ as input. It is trained to (i) match $\mathcal{C}_1$'s prediction $y_1$ via a teacher-student loss, and (ii) ensure causal alignment through the IIT loss (see Section~\ref{cabs}).

During training, the entire Explainer is optimized, with each component serving a distinct purpose (detailed in the next subsection). At inference, only the fine-tuned $\phi$ is used.

\begin{figure*}[h]
    \centering
    \includegraphics[width=\linewidth]{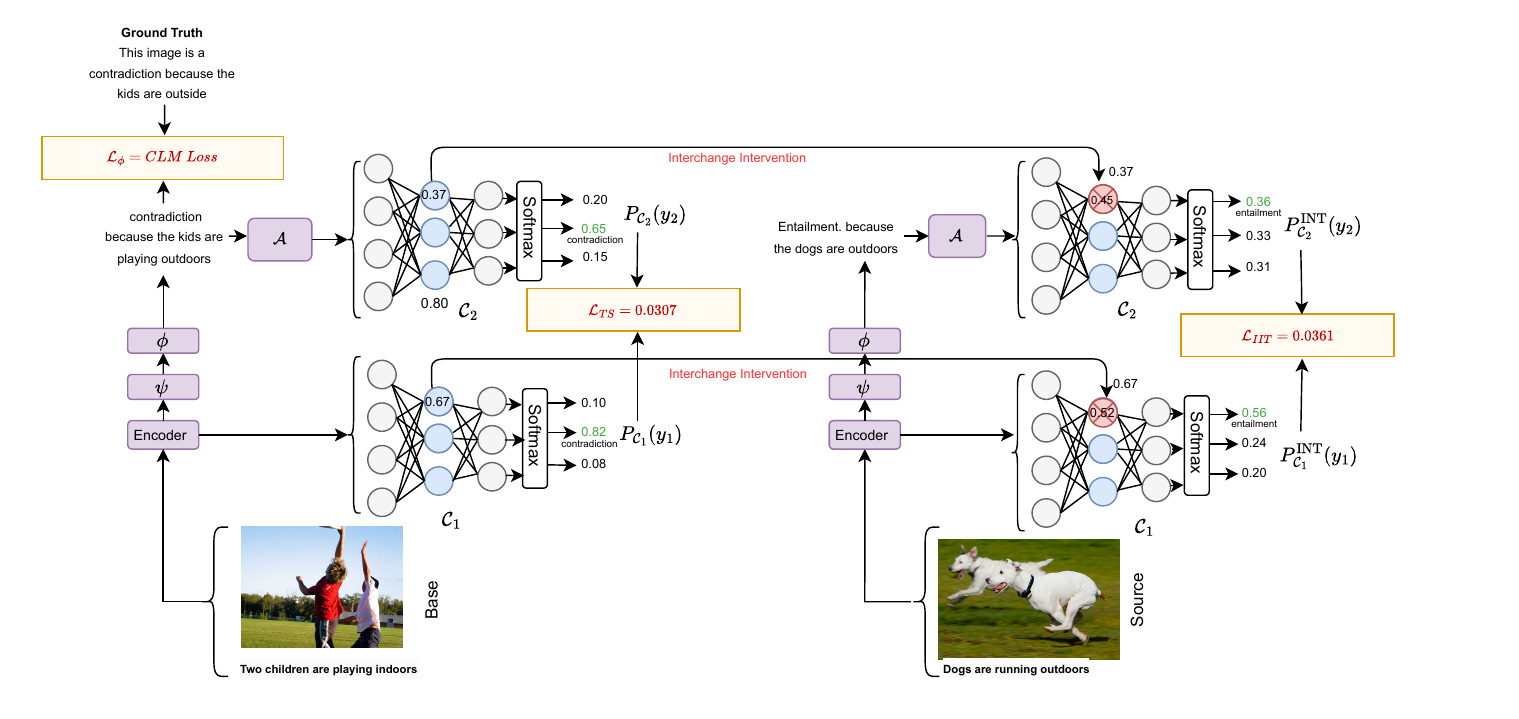}
    \caption{This \emph{toy} diagram shows the training process of the CAuSE framework, specifically the calculations of $\mathcal{L}_{TS}$ and $\mathcal{L}_{IIT}$ losses on a sample from e-SNLI-VE dataset. The input vectors to $\mathcal{C}_1$ and $\mathcal{C}_2$ are arbitrary but depicts a real scenario.}
    \label{fig:iit}
\end{figure*}
\subsection{Causal Alignment and Behavioral Simulation}\label{cabs}

The central goal of CAuSE is to ensure that the explanations produced by the Explainer are faithful to the behavior of the pretrained multimodal classifier $M = \mathcal{C}_1 \circ E$. This is accomplished in two steps: first, we make the Explainer simulate the decision of $M$; second, we ensure that the Explainer serves as a causal abstraction of $M$. Below, we define the core concepts and describe how CAuSE achieves these goals.

\paragraph{Simulation of Classifier Behavior}

We first formalize the notion of simulation.

\textbf{Definition (Simulation).} Two neural networks $A$ and $B$ are said to simulate each other if, when given the same input $z$, they produce the same output: $A(z) \approx B(z)$.

In our case, the encoder $E$ produces a multimodal representation $c = E(z)$ from an input $z = (t, v)$. This representation is passed directly to $\mathcal{C}_1$ to obtain a prediction $y_1 = \mathcal{C}_1(c)$. The same representation $c$ is also passed to the Explainer module.

To ensure that the Explainer mimics $\mathcal{C}_1$ on the same input, we train $\mathcal{C}_2$ using a teacher-student loss:
\begin{equation}
\mathcal{L}_{TS} = D_{KL}(P_{\mathcal{C}_1}(y_1) \| P_{\mathcal{C}_2}(y_2))
\end{equation}
By minimizing $\mathcal{L}_{TS}$, we ensure that $y_2=\mathcal{C}_2(F(c)) \approx \mathcal{C}_1(c)=y_1$. Since both systems receive the same input $c = E(z)$, and produce the same output, the Explainer (i.e., $\mathcal{C}_2 \circ F$) simulates the classifier $\mathcal{C}_1$ under the definition above.

\paragraph{Causal Abstraction via Identical Networks}

While simulation ensures output-level agreement, it does not guarantee that the Explainer behaves like $M$ under counterfactual interventions. For this, we seek a stronger condition: that the Explainer forms a \textit{causal abstraction} of $M$. To establish this, we first introduce the notion of causally identical neural networks.

\textbf{Definition (Causally Identical Networks).} Given two neural networks $A$ and $B$ of the same architecture, we say they are identical if: i) they share the same weights: $\mathcal{W}_A = \mathcal{W}_B$, and ii) they are trained under the IIT objective that enforces alignment of their outputs under counterfactual substitutions (Equation \ref{iitloss}). Under the definition posed in Section \ref{ii}, they become causal abstraction of each other.

To make $\mathcal{C}_2$ identical to $\mathcal{C}_1$ using the above definition, we use two objectives:
i) a weight-regularization loss to encourage weight similarity (using their frobenius norm):
    \begin{equation}
        \mathcal{R}_{\text{match}} = \|\mathcal{W}_{\mathcal{C}_1} - \mathcal{W}_{\mathcal{C}_2}\|_F
    \end{equation}
and ii) An IIT loss that encourages counterfactual agreement:
    \begin{equation}
        \mathcal{L}_{IIT} = D_{KL}(P^{\text{INT}}_{\mathcal{C}_1}(y_1) \| P^{\text{INT}}_{\mathcal{C}_2}(y_2))
    \end{equation}

When both $\mathcal{R}_{\text{match}}$ and $\mathcal{L}_{IIT}$ are minimized, $\mathcal{C}_2$ becomes \emph{identical} to $\mathcal{C}_1$, by virtue of the above definition.

\paragraph{From Classifier Alignment to Explainer Abstraction.}

We now connect the alignment between $\mathcal{C}_1$ and $\mathcal{C}_2$ to the abstraction property of the Explainer.

\textbf{Theorem.} If $\mathcal{C}_2$ is identical to $\mathcal{C}_1$ (as per the definition above), then the Explainer, comprising the LLM machinery and $\mathcal{C}_2$, becomes a causal abstraction of the pre-trained classifier $M = \mathcal{C}_1 \circ E$.

\textit{Proof Sketch.}
In Lemma \ref{theo3}, we prove that under $\mathcal{L}_{IIT}$, $F(E(z)) = E(z)$.

We consider three types of causal interventions: (i) \textbf{Input-level interventions}: Between $E$ and $F$, we assume a mapping $\alpha(z) = E(z)$. Given that $\mathcal{C}_2(F(E(z))) = \mathcal{C}_1(E(z))$ (by Lemma \ref{theo3}), the outputs from the model $M$ and the explainer remain consistent under such interventions. (ii) \textbf{Intermediate representation interventions}: Since $F$ acts as the identity function on $E(z)$ (by Lemma \ref{theo3}), interventions on $E(z)$ and $F(E(z))$ lead to equivalent outputs. (iii) \textbf{Neuron-level interventions}: When interventions are applied inside $\mathcal{C}_1$ and $\mathcal{C}_2$, the outputs stay aligned due to the shared architecture and training based on the IIT objective. Taken together, these cases demonstrate that the explainer and the model $M$ are causally equivalent under aligned interventions.

\begin{proof}
See Appendix \S\ref{proofs} for formal details.
\end{proof}

This establishes that not only does the Explainer match the classifier's outputs on observed examples (via simulation posed by $\mathcal{L}_{TS}$), but also that it faithfully reflects the causal structure of the $M$.

\paragraph{Summary.} CAuSE theoretically ensures both behavioral and causal alignment between the Explainer and the classifier: i) $\mathcal{L}_{TS}$ ensures the Explainer simulates $M$'s decision by matching predictions on the same encoder representation. ii) $\mathcal{L}_{IIT}$ and $\mathcal{R}_{\text{match}}$ make $\mathcal{C}_2$ causally identical to $\mathcal{C}_1$. This makes the overall Explainer a causal abstraction of the multimodal classifier $M = \mathcal{C}_1 \circ E$ (because of the result of the theorem posed above).

\paragraph{Overall Loss Function}
The final loss is:
\begin{align}\label{final-loss}
\mathcal{L}_{\text{CAuSE}} = \mathcal{L}_{\phi} +  \mathcal{L}_{TS} +  \mathcal{L}_{IIT} +  \mathcal{R}_{\text{match}}
\end{align}
Through a hypothetical example, the calculation of several components of the loss function is depicted in Figure \ref{fig:iit}. A walkthrough of the Figure is described in detail in Section \ref{walk_through_eg1} for reference.

\paragraph{Overall Training and Inference Protocol.}
i) We assume the base classifier $M = (\mathcal{C}_1 \circ E)$ is already trained until convergence. Then we freeze its weights. \emph{Training the classifier $M$ is not a part of CAuSE training framework.}
ii) We separately train the Explainer $(\psi, \phi, \mathcal{A}, \mathcal{C}_2)$ using $\mathcal{L}_{\text{CAuSE}}$, keeping $M$ frozen.
iii) At test time, we generate explanation $x$ from $\phi$ for any input $(t, v)$.

\paragraph{Clarification on Explainer Role.}
It is important to clarify that while the explainer is trained to simulate the predictions of $\mathcal{C}_1$, it is designed as a \emph{causal abstraction} of the entire classifier $M$. In our framework, we do not assume any inherent dependence between \emph{A being a causal abstraction of B} and \emph{A simulating B}. That is, causal abstraction and simulation are treated as independent properties. 

Nonetheless, we observe that combining simulation with causal abstraction yields better empirical results than simulation alone. Specifically, the mimicking performance of the explainer, quantified using both F1 and CCMR, improves in majority of the cases when the causal abstraction losses $\mathcal{L}_{\text{IIT}}$ and $\mathcal{R}_{\text{match}}$ are added to the simulation loss $\mathcal{L}_{\text{TS}}$ (also referred to as $\mathcal{L}_{\text{CAuSE}}$), as opposed to using $\mathcal{L}_{\text{TS}}$ alone (\S Table \ref{tab:cfs}).


\paragraph{Broad Neural Coverage under $\mathcal{L}_{IIT}$.}

At each step of minimizing $\mathcal{L}_{IIT}$, we sample $20\%$ of neurons uniformly randomly from $\mathcal{C}_1$ and $\mathcal{C}_2$ simultaneously and perform interchange intervention on them. Sampling a subset of neurons rather than performing interchange intervention on all of them is the norm in literature \cite{DBLP:journals/corr/abs-2106-02997} as that would prevent the network from being degenerate-where the network always outputs values for base input. Sampling a fraction of neurons at each backpropagation step probabilistically ensures broad coverage of network’s internal representation. With a very high probability, after a finite number of backpropagation steps, all the neurons in  $\mathcal{C}_1$ and $\mathcal{C}_2$ would be sampled at least once and interchange intervention would be done on all of them. Details of such a bound can be found in the Appendix Section \ref{bound}.

\subsection{Walk-through Example} \label{walk_through_eg1}

Figure~\ref{fig:iit} illustrates the training process of CAuSE on an e-SNLI-VE sample, involving three loss components: $\mathcal{L}_\phi$, $\mathcal{L}_{TS}$, and $\mathcal{L}_{IIT}$. Note that the following values are illustrative and derived from a simplified toy version of the CAuSE components, but the procedure remains identical in the full framework.

\paragraph{Step 1: Causal Language Modeling Loss ($\mathcal{L}_\phi$)}  
The base input is passed through the frozen encoder of $M$ to obtain the multimodal representation $E(z)$. This is then projected via $\psi$ and provided to the language model $\phi$, which generates a natural language explanation. The explanation generation step is supervised using a standard causal language modeling loss as shown in Equation \ref{clm}.

\paragraph{Step 2: Teacher–Student Loss ($\mathcal{L}_{TS}$)}  
The explanation from $\phi$ is passed through the aggregator $\mathcal{A}$ and input to classifier $\mathcal{C}_2$. Let the output distributions from $\mathcal{C}_1$ and $\mathcal{C}_2$ be:
\[
P_{\mathcal{C}_1}(y_1) = [0.10,\ 0.82,\ 0.08]
\]
\[
P_{\mathcal{C}_2}(y_2) = [0.20,\ 0.65,\ 0.15]
\]
Using KL divergence, the teacher–student loss is:
\[
\mathcal{L}_{TS} = D_{\mathrm{KL}}(P_{\mathcal{C}_1}(y_1)\,\|\,P_{\mathcal{C}_2}(y_2)) = 0.0307
\]

\paragraph{Step 3: Interchange Intervention Loss ($\mathcal{L}_{IIT}$)}  
The source input is encoded similarly. During this pass, neuron activations from the base pass (e.g., 0.67 from $\mathcal{C}_1$ and 0.37 from $\mathcal{C}_2$) are used to interchange (overwrite) corresponding activations for the source input. This is called interchange intervention. The resulting predictions after intervention are:
\[
P^{\text{INT}}_{\mathcal{C}_1}(y_1) = [0.56,\ 0.24,\ 0.20]
\]
\[
P^{\text{INT}}_{\mathcal{C}_2}(y_2) = [0.36,\ 0.33,\ 0.31]
\]
The IIT loss is:
\[
\mathcal{L}_{IIT} = D_{\mathrm{KL}}(P^{\text{INT}}_{\mathcal{C}_1}(y_1)\,\|\,P^{\text{INT}}_{\mathcal{C}_2}(y_2)) = 0.0361
\]

\paragraph{Training and Inference}  
Only the explainer modules ($\psi$, $\phi$, $A$, $\mathcal{C}_2$) are updated. The base classifier $M = \text{Encoder} + \mathcal{C}_1$ remains frozen. During inference, we use only $\psi$ and $\phi$ to generate faithful explanations conditioned on $E(z)$.

\subsection{Generalizability and Retraining in CAuSE} \label{retraining}

CAuSE is dataset-agnostic and requires no significant architectural changes across tasks. Its core components, $\phi$, Aggregator $\mathcal{A}$, and $\mathcal{C}_2$; are both \textit{necessary} and \textit{sufficient}: necessary, because each component is essential to ensure causal abstraction between the classifier $M$ and the explainer; and sufficient, because no structural modifications are needed when switching to a new dataset or classifier.

Minor adjustments include: (i) setting the output dimension of $\mathcal{C}_2$ to match the number of labels (e.g. 2 for Hateful Memes; 3 for e-SNLI-VE; 3,129 for VQA-X), and (ii) adapting $\psi$ to the hidden size of $c = E(x)$. Retraining is required for each (dataset $D$, classifier $M$) pair, as CAuSE is designed to explain a specific frozen model $M$ trained on $D$.

This retraining is lightweight, only $\sim$450M parameters trained and inference uses only $\sim$270M parameters, much smaller than large VLMs like PaLiGemma~\cite{beyer2024paligemmaversatile3bvlm} or LLaVA~\cite{liu2023improved}, which still underperform CAuSE (Table~\ref{tab:baselines}).

Overall, CAuSE is broadly applicable as long as hidden states $E(z)$ from $M$ and ground-truth explanations for $D$ are available. Adaptation to different tasks requires minimal effort with little architectural changes.

\section{Counterfactual Consistency via Multimodal Representation (CCMR)}\label{ccmr-score}

Existing faithfulness metrics such as CCT~\cite{siegel2024probabilitiesmatterfaithfulmetric} and Explanation Mention~\cite{atanasova-etal-2023-faithfulness} are designed for unimodal text models and rely on discrete text perturbations to test whether changes in input lead to corresponding changes in explanations. However, extending this to multimodal settings is nontrivial: there is no clear notion of discrete perturbation for image inputs, and fused text-image representations make it hard to isolate the effect of individual modalities. A text change may not be reflected in the explanation even if the model’s decision process is faithful, due to how multimodal fusion influences model behaviour. To address this, we propose CCMR, a metric based on continuous perturbations in multimodal representation space, enabling more reliable faithfulness evaluation.

\subsection{CCMR Score Calculation}\label{ccmt-score}

\textit{Counterfactual Faithfulness~\cite{atanasova-etal-2023-faithfulness}} : 
Consider an input text \( Z = \{w_0, w_1, \dots, w_i, \dots, w_{n-1}, w_n\} \), where \( w_i \) is the \( i \)-th word or token. A counterfactual input \( Z^C \) is created by replacing \( w_i \) with \( w^c_i \):  
\[
Z^C = \{w_0, w_1, \dots, w^c_i, \dots, w_{n-1}, w_n\}.
\]  
For a model, which generates natural language explanations (NLEs), let \( X \) and \( X^C \) be the NLEs for \( Z \) and \( Z^C \), respectively. The NLE \( X \) is considered faithful if \( w^c_i \in X^C \), meaning the explanation reflects the discrete input change.  

This test has two limitations: i) it does not support continuous input changes, and ii) \( X^C \) may not be a proper counterfactual, as it might not change the model's prediction class. We address these issues by constructing counterfactual representations as follows:  

Let \( z \in \mathcal{T} \) be a vector representation of a data point from the test set. The corresponding counterfactual input \( z' \) for $M$ satisfies:  
\[
z' = \arg\min_{z' \in \mathcal{T}} d(z, z') \text{ s.t. } \mathcal{C}_1(E(z)) \neq \mathcal{C}_1(E(z'))
\]

where \( d \) is Euclidean distance metric, and \( \mathcal{C}_1(z) \) denotes the output class of $M$ given input $z$. The counterfactual \( z' \) can be expressed as:  $z' = z + \mu$, where $\mu = z' - z$ is the perturbation. Given $z' =z +\mu$ as input to $M$, let us assume its output class is $y_1^{CF}$ .

Inspired by \citet{atanasova-etal-2023-faithfulness}, we define an explanation from $\phi$ as faithful if it reflects the change in $M$’s output under a counterfactual input. Let $E(z)$ be the input to the explainer $F$ when $M$ receives input $z$. We construct a potential counterfactual for the explainer as $E(z) + \nu$, and ensure it is valid for $M$ by solving a constrained optimization on $\nu$ such that $\mathcal{C}_1(E(z) + \nu) = \mathcal{C}_1(z + \mu)$.

Since $\mathcal{C}_2 \circ F$ is a causal abstraction of $M$, a counterfactual for $M$ should induce a corresponding counterfactual in the explainer. We input $E(z) + \nu$ to $F$ and obtain the predicted class from $\phi$ as $y_2^{CF}$. The Counterfactual Consistency via Multimodal Representation (CCMR) score is computed as the F1 score between $y_2^{CF}$ and $y_1^{CF}$, across test examples.

Importantly, we do not feed $z'$, a known counterfactual for $M$ from the test set, to the explainer. Doing so would let even a naive mimic of $M$ achieve high CCMR. Instead, we optimize for $\nu$ to construct a non-test representation of a counterfactual, ensuring the CCMR score reflects true causal consistency rather than mere imitation.

Algorithm \ref{alg:counterfactual} shows the calculation of the CCMR score and Table \ref{tab:cfs} shows the "faithfulness" performance of components of the the proposed CAuSE framework. 

\begin{algorithm}[!h]
\scriptsize
\caption{CCMR Score for the pre-trained classifier $M$ and the explainer}
\label{alg:counterfactual}
\KwIn{Data-point \( \mathbf{z} \in \mathcal{T} \)}

\SetKwFunction{FMain}{CounterFactual}
\SetKwProg{Fn}{Function}{:}{end}
\Fn{\FMain{$\mathbf{z}$}}{
    \( \mathbf{z'} \gets \arg\min_{\mathbf{z'} \in \mathcal{T}} d(\mathbf{z}, \mathbf{z'}) \) \text{ s.t. } \( \mathcal{C}_1(E(\mathbf{z})) \neq \mathcal{C}_1(E(\mathbf{z'})) \) \;
    \( \mu \gets \mathbf{z'} - \mathbf{z} \) \tcp*[r]{Compute the perturbation}

   \textbf{Optimize $\nu$ such that} \( \mathcal{C}_1(E(\mathbf{z}) + \mathbf{\nu}) = \mathcal{C}_1(E(\mathbf{z + \mathbf{\mu}})) \) 

    \KwRet{$E(\mathbf{z}) + \mathbf{\nu}$, $\mathbf{z'}$}
}

\SetKwProg{P}{Procedure}{:}{end}
\P{Calculate CCMR Score}{
    \SetKwData{ZList}{ZList}
    \SetKwData{XList}{XList}

    \ZList \(\gets \emptyset\) \;
    \XList \(\gets \emptyset\) \;

    \While{\( \mathcal{T} \neq \emptyset \)}{
        Sample \( \mathbf{p} \in \mathcal{T} \) \tcp*[r]{Draw a new data point}
        \( \mathbf{q'}, \mathbf{p'} \gets \FMain(\mathbf{p}) \) \;

        \ZList \(\gets \ZList \cup \{\mathcal{C}_2(F(\mathbf{q'}))\}\) \tcp*[r]{Append \( \mathcal{C}_2(F(\mathbf{q'})) \) to the list}
        \XList \(\gets \XList \cup \{\mathcal{C}_1(E(\mathbf{p'}))\}\) \tcp*[r]{Append \( \mathcal{C}_1(E(\mathbf{p'})) \) to the list}

        \( \mathcal{T} \gets \mathcal{T} - \{\mathbf{p}\} \) \;
    }

    \KwRet{$F_1(\XList, \ZList)$} \tcp*[r]{Return F1 score between \( \XList \) and \( \ZList \)}
}
\end{algorithm}

\textbf{Composite CCMR.} Varying $\nu$ may push $E(z) + \nu$ to become an out-of-distribution (OOD) sample for the explainer. This is especially true if the explainer is not a causal abstraction of $M$. In such cases, $F$ may fail to generate coherent predictions containing any of the output classes. Table \ref{tab:cfs} reports the percentage of feasible generations and their corresponding CCMR (across faithful generations). The composite CCMR is defined as the harmonic mean of these two and reflects the true causal faithfulness of CAuSE and its counterparts.

\section{Results and Analysis}\label{res-all}

\subsection{Dataset and Experimentation}\label{ds-exp}

\textbf{Datasets.} The e-SNLI-VE and VQA-X datasets provide human-annotated explanations for each image–text pair, which we use as ground-truth instances for training CAuSE. In contrast, the Facebook Hateful Meme (HM) dataset lacks such annotations. To address this, we generate synthetic explanations using the GPT-4o\footnote{\url{https://chatgpt.com/}} language model for the offensive class, followed by manual verification and post-processing. These curated explanations serve as ground-truth for training CAuSE. The details of this data creation process are provided in Appendix~\ref{dataprep}.

However, not all instances are used for training (generating explanations). The availability of training explanations is filtered based on the accuracy of the underlying multimodal classifier $M$. For example, if $M$ achieves 70\% accuracy on a batch of 1,000 samples, only the 700 correctly predicted examples, along with their corresponding explanations, are retained for training CAuSE on ground-truth prediction and explanations. For 300 misclassified examples, CAuSE is trained only to mimic $M$'s prediction.
This filtering ensures that CAuSE learns to explain $M$’s actual predictions, not the dataset labels. This filtering procedure is described in Algorithm~\ref{alg:cause_filtering}.

\begin{algorithm}[h]
\scriptsize
\caption{Filtering Dataset for CAuSE}
\label{alg:cause_filtering}
\KwIn{Test dataset $\mathcal{D} = \{(x_i, y_i, e_i)\}_{i=1}^N$, classifier $M$}
\SetKwData{Dprime}{D'}
\KwOut{Filtered dataset $\Dprime$}

\SetKwProg{P}{Procedure}{:}{end}
\P{FilterCorrectPredictions}{
    
    \Dprime \(\gets \emptyset\) \;

    \ForEach{$(x_i, y_i, e_i) \in \mathcal{D}$}{
        $\hat{y}_i \gets M(x_i)$ \tcp*[r]{Predict using classifier}
        \If{$\hat{y}_i = y_i$}{
            \Dprime \(\gets \Dprime \cup \{(x_i, \hat{y}_i, e_i)\}\) \tcp*[r]{Keep correctly predicted samples}
        }
        \Else {
        \Dprime \(\gets \Dprime \cup \{(x_i, \hat{y}_i)\}\) 
        }
    }

    \KwRet{$\Dprime$} \tcp*[r]{Return filtered dataset}
}
\end{algorithm}

\textbf{Experimentation.} The experiments were performed on a Kaggle kernel with PyTorch version 2.1.2 and a single P-100 GPU, with a random seed of $42$ maintained for all runs. Additionally,  visual language model (VLMs) baselines were implemented using PEFT\footnote{\url{https://github.com/huggingface/peft}} and LoRA \citep{hu2021loralowrankadaptationlarge}. 
\begin{table}[ht]
    \centering
    \scriptsize
    \adjustbox{width=0.75\columnwidth}{\begin{tabular}{|c|c|c|}
        \hline
        \textbf{Dataset} & \textbf{Train Split} & \textbf{Test Split} \\
        \hline
        e-SNLI-VE & 9000 & 1000 \\
        \hline
        Hateful Memes & 6997 & 1000 \\
        \hline
        VQA-X & 5997 & 960 \\ \hline
    \end{tabular}}
    
    \caption{Train-test splits for e-SNLI-VE and Hateful Memes, and VQA-X datasets.}\label{tab:dataset_splits}
\end{table}

\subsection{Automatic Evaluation} CAuSE is evaluated across two verticals: i) Faithfulness, and ii) Plausibility. 

Faithfulness measures the alignment between predictions of the explainer (e.g. CAuSE or $\phi$ or $\phi +TS$) \footnote{CAuSE, $\phi$, and $\phi$+TS refer to explainers trained with the loss functions $\mathcal{L}_{CAuSE}$, $\mathcal{L}_{\phi}$, and $\mathcal{L}_{\phi} + \mathcal{L}_{TS}$, respectively.} and $M$ for a specific input. It is quantified by F1 score between CAuSE predictions and predictions from $M$. Under the counterfactual regime, the normal F1 score is replaced by CCMR score. The process of calculating CCMR score is already illustrated in Section \ref{ccmt-score}.

Plausibility measures the semantic similarity of the generated NLEs from the explainer (e.g. CAuSE or $\phi$ or $\phi +TS$) to the ground-truth (GT) explanations. Note that we do not have access to \emph{oracle} explanations of the classifier decisions. Ground-truth explanations work as a proxy for unavailable \emph{oracle} explanations. BLEU~\cite{papineni-etal-2002-bleu} and BERTScore~\cite{zhang2020bertscoreevaluatingtextgeneration} are used to measure the plausibility.

\textbf{Faithfulness-plausibility tradeoff.} \label{f-p_tradeoff}
GT explanations may not reflect how the classifier $M$ actually arrives at its decisions (i.e. \emph{oracle} explanations).
As a result, an explainer trained to be a causal abstraction of the $M$ (via $\mathcal{L}_{CAuSE}$) can achieve high predictive alignment with $M$, as measured by metrics like F1 or CCMR. However, its generated explanations may score lower on BLEU or BERTScore when compared to the human-provided GT explanations, since the classifier's decision-making process might differ from human reasoning. 

This leads to a \emph{faithfulness–plausibility tradeoff}, high faithfulness (to the classifier) may come at the cost of plausibility and vice versa. 
Ideally, if we had access to the classifier’s actual reasoning process as an oracle, we could directly measure alignment. 
In its absence, F1/CCMR captures faithfulness, while BLEU/BERTScore reflects plausibility. This tradeoff is evident from Tables \ref{tab:cfs} and \ref{tab:abl}. $\phi$ achieves higher plausibility (cf. Table \ref{tab:abl}) at the cost of lower faithfulness (cf. Table \ref{tab:cfs}) compared to CAuSE, perfectly illustrating the tension between these two metrics.

\textbf{An Illustration of tradeoff.} We illustrate the faithfulness-plausibility tradeoff on three datasets through specific examples, taking  Qwen-VL \cite{bai2023qwenvlversatilevisionlanguagemodel} as $M$. In these examples, $\phi$ generated explanations are plausible and align with GT explanation, but for predictions, they are unfaithful, as they do not match the output of the underlying base model $M$. Conversely, for the same examples, CAuSE produces \textit{informative} explanations that are faithful to the base model’s predictions, yet less aligned with GT explanations (less plausible). These contrasting behaviours clearly demonstrate the inherent tradeoff between faithfulness and plausibility. Figure \ref{fig:pf-tradeoff} illustrates these.

\textit{VQA-X:} In a VQA-X example, the model ($M$) answers “yes” to the question “Is a storm going on?”. $\phi$’s explanation (“the answer is in the sky above the clouds”) is highly plausible, as it resembles the human-provided explanation, but unfaithful because it omits $M$’s prediction. Conversely, CAuSE’s explanation (“yes because it is black and white”) is faithful by retaining the model’s output but less plausible. Notably, CAuSE’s output is more diagnostic, as it reveals that $M$ may be relying on a superficial shortcut (the image being black and white) rather than semantic understanding (presence of black clouds).

\textit{e-SNLI-VE:} For the hypothesis “The men are fishermen”, which $M$ predicts as an entailment, $\phi$'s explanation suggests a neutral outcome and thus does not reflect the model's label, indicating low faithfulness. In contrast, CAuSE’s explanation perfectly mirrors the model's prediction. Although $\phi$'s explanation is unfaithful, it is more plausible than CAuSE’s because it has greater similarity to the human explanation, from which the CAuSE explanation is completely divergent. This clearly shows the faithfulness-plausibility tradeoff.

\textit{Hateful Meme:} The gold label is not offensive, while the classifier $M$ predicts offensive. This mismatch truncates the gold explanation to match the predicted label (refer to Section \ref{ds-exp}, Algorithm \ref{alg:cause_filtering}), inflating the plausibility of $\phi$’s explanation (“not offensive because not offensive”) that is more similar to the gold explanation than CAuSE’s explanation. CAuSE instead matches the classifier output (“offensive because it promotes harmful stereotypes”), thus faithful. Despite lower similarity to the truncated gold explanation, CAuSE is more informative as it faithfully reveals the purported reasoning behind the misclassification of $M$.


\begin{table*}[t]
\centering

\adjustbox{width=\textwidth}{
\begin{tabular}{l|l|lclc|lclc|lclc}
\toprule
\textbf{$M$}               & \textbf{Ablations}  &\multicolumn{4}{c|}{\textbf{Hateful Meme}}&\multicolumn{4}{c}{\textbf{e-SNLI-VE}} & \multicolumn{4}{c}{\textbf{VQA-X}}\\ \cmidrule(l){3-14} 
                                   &                &\textbf{F1} (\%)& \textbf{CCMR (\%)} &  \textbf{\% gen.}&\textbf{Composite CCMR}&\textbf{F1} (\%)& \textbf{CCMR (\%)} & \textbf{\% gen.} &\textbf{Composite CCMR}  & \textbf{F1} (\%)& \textbf{CCMR (\%)} & \textbf{\% gen.} &\textbf{Composite CCMR}  \\ \midrule
\multirow{3}{*}{CLIP+MFB}      & $ \phi$&99.00& \textbf{83.00}&  10.20&18.16&93.69& \textbf{79.89}& 17.72 &42.90  &\textbf{84.16}&44.79 &\textbf{86.97}&59.13\\
                                   & $ \phi + TS$&98.80& 67.86&  5.71&10.53&81.08& 60.19& 31.47 &41.33  &76.88 &44.92 &85.10 &58.80\\ \cdashline{2-14}
                                   & $CAuSE$ 
                                    &\textbf{99.30}& 55.40&  \textbf{50.10}&\textbf{52.65}&\textbf{95.30}& 67.36& \textbf{87.68}&\textbf{76.19}  &74.53 &\textbf{46.27}&86.67&\textbf{60.37}\\ \midrule
\multirow{3}{*}{VisualBERT}       & $\phi$&99.50& 38.15&  39.58&38.85&98.80& 69.20& 51.65 &59.15  &87.50 &6.17 &70.93 &11.35\\
                                   & $\phi + TS$&\textbf{100.00}& 40.05&  41.35&40.69&94.89& 64.08& \textbf{74.48}&\textbf{68.89}  &79.80 &5.27 &80.12 &9.89\\ \cdashline{2-14}
                                   & $CAuSE$ 
                                    &99.40& \textbf{47.94}&  \textbf{55.63}&\textbf{51.50}&\textbf{98.91}& \textbf{69.40}& 59.09 &63.83  &\textbf{87.83}&\textbf{26.84}&\textbf{80.19}&\textbf{40.22}\\ \bottomrule
 \multirow{3}{*}{FLAVA}& $ \phi$& \textbf{99.40}& 36.36& 2.37& 4.45& 97.40& 83.77& 50.71& 63.18& \textbf{97.40}& 33.33& 55.00& 41.51\\
 & $ \phi + TS$& 96.90& 34.97& \textbf{64.00}& \textbf{45.23}& 97.40& 82.92& \textbf{57.11}& 67.64& 83.02& 19.52& 56.56& 29.02\\
 \cdashline{2-14}& $CAuSE$ & 99.30& \textbf{49.68}& 16.76& 25.06& \textbf{97.70}& \textbf{87.20}& 56.40& \textbf{68.50}& 80.44& \textbf{35.22}& \textbf{62.08}& \textbf{44.94}\\ \midrule
 \multirow{3}{*}{Qwen-VL}&                                     $\phi$& 68.60& 38.49& 99.03& 55.43& 93.29& 65.10& 97.07& 77.93& 59.79& 10.44& \textbf{74.79}& 18.32\\
 & $ \phi + TS$& 61.30& 48.79& 99.87& 65.56& 85.59& 63.29& 77.33& 69.61& 26.23& 4.42& 70.84& 8.32\\
 \cdashline{2-14}& $CAuSE$ & \textbf{70.60}& \textbf{49.28}& \textbf{100.00}& \textbf{66.02}& \textbf{94.09}& \textbf{68.00}& \textbf{99.65}& \textbf{80.84}& \textbf{61.15}& \textbf{18.40}& 65.10& \textbf{28.69}\\ \midrule
\end{tabular}}
\caption{\emph{Faithfulness} Results for various classifiers ($M$), using CLIP+MFB, VisualBERT, FLAVA, and Qwen-VL. The table reports the effect of ablating individual loss components from Equation \ref{final-loss}.}

\label{tab:cfs}
\end{table*}

\begin{figure*}[t]
    \centering
    \includegraphics[height=7cm, keepaspectratio]{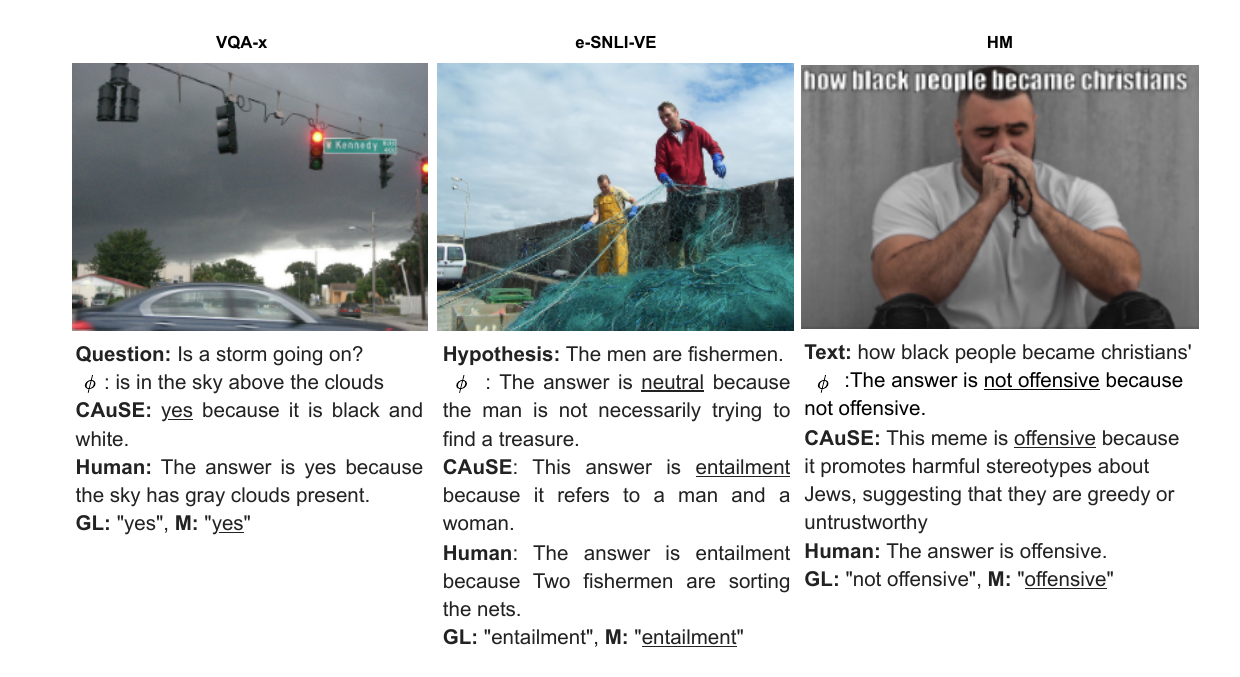}
    \caption{Examples illustrating the faithfulness–plausibility trade-off across the three datasets.
In each case, $\phi$ produces a more plausible explanation for an incorrect prediction (i.e. unfaithful), while CAuSE generates a less plausible but more faithful (reflecting a correct prediction) explanation. Both $\phi$ and CAuSE seek to emulate the classifier’s output and provide justifications consistent with the predicted answer. \textbf{GL:} Ground-truth label, \textbf{M:} Prediction from $M$.}
    \label{fig:pf-tradeoff}
\end{figure*}

\subsection{Baselines}\label{baselines}

\textbf{VLM baselines.}
Post-hoc interpretability techniques typically focus on identifying implicit (e.g., Integrated Gradients~\cite{sundararajan2017axiomaticattributiondeepnetworks}) or explicit concepts (e.g., Semantify~\cite{ijcai2024p684}) rather than generating coherent natural language explanations (NLEs). Visual language models (VLMs) can leverage zero-shot/few-shot prompting to generate plausible NLEs yet they are not guaranteed to be faithful~\cite{turpin2023language,madsen2024selfexplanations}.

CAuSE differs from prompting and fine-tuning VLMs by leveraging the hidden state of a multimodal encoder as the initial token. This ensures the generated NLEs to have been directly informed by the classifier’s internal decision process, making them more faithful and aligned with the classifier’s behavior, a capability absent in naive prompting or fine-tuning.

Nevertheless, we evaluate prompt based and fine-tuned VLMs as baselines for comparison with CAuSE. These models perform significantly worse across all metrics (see Table~\ref{tab:baselines}) \footnote{Prompts are defined in \url{https://github.com/newcodevelop/CAuSE/blob/main5/prompts.pdf}}. 

\textbf{Performance of Baselines.}

\textit{(a) Naive Prompting:} We evaluated zero-shot and few-shot ($k=0/2$) prompting using PaLiGemma and LLaVA, where VLMs were prompted with in-context examples showing classifier $M$'s decisions. However, the generated explanations showed poor faithfulness on Hateful Memes and e-SNLI-VE. Though the models showed improved faithfulness for VQA-X, the plausibility scores were very poor.

\textit{(b) Fine-Tuning:} Fine-tuning VLMs to imitate the classifier $M$ led to substantial improvements over prompting. These models are trained to reproduce $M$’s outputs given the same inputs, effectively learning to approximate its decision function.


\begin{table}[h]
\centering

\adjustbox{width=\columnwidth}{\begin{tabular}{@{}l|l|l|ccccc@{}}
\toprule
\textbf{Dataset} & \textbf{$M$} & \textbf{Ablations} & \textbf{BLEU-1} & \textbf{BLEU-2} & \textbf{BLEU-3} & \textbf{BLEU-4} & \textbf{BERTScore} \\ \midrule
\multirow{12}{*}{\textit{Hateful Meme}} 
 & \multirow{3}{*}{CLIP+MFB} & $\phi$ & 0.70& 0.65& 0.62& \textbf{0.59}& 0.98\\
 & & $\phi + TS$& 0.67& 0.61& 0.57& 0.53& 0.97\\ \cdashline{3-8}
 & & $CAuSE$ & \textbf{0.70}& \textbf{0.65}& \textbf{0.62}& 0.58& \textbf{0.98}\\ \cmidrule(lr){2-8}
 & \multirow{3}{*}{VisualBERT} & $\phi$ & \textbf{0.71}& \textbf{0.68}& \textbf{0.65}& \textbf{0.63}& \textbf{0.97}\\
 & & $\phi + TS$& 0.71& 0.68& 0.65& 0.63& 0.97\\ \cdashline{3-8}
 & & $CAuSE$ & 0.67& 0.62& 0.59& 0.56& 0.96\\
 \cmidrule(lr){2-8}& \multirow{3}{*}{ FLAVA}& $\phi$ &\textbf{0.62}&\textbf{0.54}&\textbf{0.46}&0.38&0.93\\
  & & $\phi + TS$&0.53&0.46&0.40&0.34&0.91\\
 \cdashline{3-8}& & $CAuSE$ &0.61&0.52&0.44&0.36&\textbf{0.93}\\
 \cmidrule(lr){2-8}& \multirow{3}{*} {Qwen-VL}& $\phi$ &0.58&0.52&0.48&0.45&0.95\\
  & & $\phi + TS$&\textbf{0.68}&\textbf{0.62}&\textbf{0.57}&\textbf{0.54}&0.95\\
 \cdashline{3-8}& & $CAuSE$ &0.55&0.49&0.44&0.40&\textbf{0.95}\\ \midrule
 \multirow{12}{*}{\textit{e-SNLI-VE}}& \multirow{3}{*}{CLIP+MFB} & $\phi$ & \textbf{0.39}& \textbf{0.33}& \textbf{0.28}& \textbf{0.24}& \textbf{0.91}\\
  & & $\phi + TS$& 0.34& 0.28& 0.24& 0.20& 0.90\\ \cdashline{3-8}
 & & $CAuSE$ & 0.38& 0.32& 0.27& 0.24& 0.90\\ \cmidrule(lr){2-8}
 & \multirow{3}{*}{VisualBERT} & $\phi$ & \textbf{0.43}& \textbf{0.36}& \textbf{0.31}& \textbf{0.27}& \textbf{0.91}\\
 & & $\phi + TS$& 0.38& 0.32& 0.27& 0.23& 0.90\\ \cdashline{3-8}
 & & $CAuSE$ & 0.39& 0.32& 0.27& 0.23& 0.90\\ 
 \cmidrule(lr){2-8}& \multirow{3}{*}{ FLAVA}& $\phi$ &\textbf{0.43}&\textbf{0.36}&\textbf{0.31}&\textbf{0.27}&\textbf{0.91}\\
 & & $\phi + TS$&0.42&0.35&0.30&0.26&0.91\\
 \cdashline{3-8}& & $CAuSE$ &0.40&0.32&0.26&0.22&0.90\\
 \cmidrule(lr){2-8}& \multirow{3}{*} {Qwen-VL}& $\phi$ &\textbf{0.42}&\textbf{0.34}&\textbf{0.27}&\textbf{0.22} &\textbf{0.91}\\
 & & $\phi +TS$&0.39 &0.31 &0.24 &0.20&0.91\\
 \cdashline{3-8}& & $CAuSE$ &0.38 &0.29 &0.23 &0.18 &0.91\\ \midrule
 \multirow{12}{*}{\textit{VQA-X}}& \multirow{3}{*}{CLIP+MFB} & $\phi$ & \textbf{0.43}& \textbf{0.36}& \textbf{0.31}& \textbf{0.25}& \textbf{0.92}\\
  & & $\phi+ TS$& 0.17& 0.14& 0.11& 0.09& 0.88\\ \cdashline{3-8}
 & & $CAuSE$ & 0.24& 0.19& 0.15& 0.12& 0.88\\ \cmidrule(lr){2-8}
 & \multirow{3}{*}{VisualBERT} & $\phi$ & \textbf{0.57}& \textbf{0.51}& \textbf{0.46}& \textbf{0.41}& \textbf{0.94}\\
 & & $\phi + TS$& 0.33& 0.27& 0.22& 0.18& 0.90\\ \cdashline{3-8}
 & & $CAuSE$ & 0.33& 0.27& 0.23& 0.19& 0.90\\ 
 \cmidrule(lr){2-8}& \multirow{3}{*}{ FLAVA}& $\phi$ & \textbf{0.55}& \textbf{0.52}&\textbf{ 0.49}& \textbf{0.45}&\textbf{0.95}\\
 & & $\phi + TS$& 0.48& 0.40& 0.35& 0.30&0.90\\
 \cdashline{3-8}& & $CAuSE$ & 0.13& 0.11& 0.09& 0.07&0.83\\
 \cmidrule(lr){2-8}& \multirow{3}{*} {Qwen-VL}& $\phi$ & 0.31& \textbf{0.24}& \textbf{0.17}& \textbf{0.10}&\textbf{0.89}\\
 & & $\phi + TS$& 0.20& 0.12& 0.08& 0.05&0.85\\
 \cdashline{3-8}& & $CAuSE$ & \textbf{0.23}& 0.17& 0.11& 0.07&0.88\\ \midrule
\end{tabular}}

\caption{\emph{Plausibility} results for various classifiers ($M$) showing the role of ablating various loss components in Equation \ref{final-loss}.}
\label{tab:abl}
\end{table}


\subsection{Ablation studies}

\textbf{What is the role of loss functions other than $\mathcal{L}_{\phi}$?} 

In Table \ref{tab:cfs}, CAuSE achieves the highest F1 score across almost all experiments, except for a few exceptions (four out of twelve cases in total).

In terms of causal abstraction, as measured by Composite CCMR, CAuSE also obtains the highest scores in most cases, with a few exceptions of VQA-X and Hateful meme datasets (two out of twelve cases in total). However, for plausibility (cf. Table \ref{tab:abl}), CAuSE ranks second in most combinations highlighting the faithfulness-plausibility trade-off we have mentioned in section \ref{f-p_tradeoff}.

IIT ensures causal abstraction between $M$ and the explainer. Consequently, we observe CAuSE obtains a slightly higher composite CCMR score and F1 score compared to its counterparts ($\phi$, and $\phi+TS$) in most cases (cf. Table \ref{tab:cfs}). This result provides empirical evidence that our proposed loss, $\mathcal{L}_{IIT}$, improves causal faithfulness.

\textbf{Is the faithfulness of CAuSE classifier agnostic?}
Yes it is \emph{largely classifier agnostic} since it maintains faithfulness for all the evaluated models in most settings. We have used two representative models from two fusion mechanisms widely used in literature, CLIP+MFB for late-fusion and VisualBERT for early-fusion. Also, two modern classifiers are used: FLAVA \cite{singh2022flava} and Qwen-VL \cite{bai2023qwenvlversatilevisionlanguagemodel}\footnote{\url{https://huggingface.co/Qwen/Qwen2-VL-2B}}. Note that, we use Qwen-VL by replacing its language modeling (LM) head layer with a multi-layered perceptron (MLP) for classification. For all of these models, CAuSE obtains a better CCMR score compared to the counterparts ($\phi$ or $\phi + TS$) in the majority of cases. Refer to Table \ref{tab:cfs} for reference.

\begin{table}[h]
\centering
\scriptsize
\begin{tabular}{@{}llcc@{}}
\toprule
\textbf{Dataset} & \textbf{Model} ($M = $Qwen-VL) & \textbf{Relatedness} & \textbf{Fluency} \\
\midrule
\multirow{2}{*}{HM} & $\phi$ & 2.10& \textbf{3.13}\\
 & CAuSE & \textbf{2.23}& 2.90\\
\midrule
\multirow{2}{*}{e-SNLI-VE} & $\phi$ & \textbf{2.87}& \textbf{3.09}\\
 & CAuSE & 2.70& 2.91\\
\midrule
\multirow{2}{*}{VQA-X} & $\phi$ & 1.90& \textbf{3.05}\\
 & CAuSE & \textbf{2.53}& 2.55\\
\bottomrule
\end{tabular}
\caption{Human evaluation scores for relatedness and fluency across datasets. The model M is Qwen-VL.}
\label{tab:human_evaluation}
\end{table}

\textbf{Are explanations from various models useful?}
A critical question is whether CAuSE's explanations remain useful despite lower plausibility. To investigate this, two annotators independently evaluated 150 randomly sampled explanations from CAuSE and $\phi$ across three datasets using a 5-point Likert scale ($1$=poor, $2$=fair, $3$=good, $4$=very good, $5$=excellent) on two dimensions:

i) \textit{Relatedness:} It assesses whether an explanation provides logically coherent justification for the model $M$'s prediction, independent of human-like reasoning (or GT explanation). For instance, CAuSE's VQA-X explanation "yes because it is black and white" earned high relatedness by meaningfully exposing the model's shortcut reasoning, despite diverging from human explanations referencing dark clouds (cf. left-most image of Figure \ref{fig:pf-tradeoff}).

ii) \textit{Fluency:} It measures grammatical quality of the generated explanation.

Annotators accessed inputs and predictions of $M$ but were explicitly instructed not to consult ground-truth explanations to avoid bias towards human-like reasoning. Inter-annotator agreement was substantial for both metrics: Cohen's $\kappa = 0.68$ for relatedness and $\kappa = 0.76$ for fluency.

Table~\ref{tab:human_evaluation} shows CAuSE maintains comparable utility to $\phi$: slightly higher relatedness ($2.49$ vs. $2.29$ for $\phi$) but slightly lower fluency ($2.79$ vs. $3.09$  for $\phi$). The minimal score difference across both metrics indicates that CAuSE's enhanced faithfulness (as measured through CCMR and F1) incurs a negligible cost in perceived usefulness, validating that CAuSE explanations, which purportedly expose the model's true reasoning, retain value for understanding discriminative classifier decisions.

\subsection{Qualitative studies}

\begin{table}[h]
\centering

\adjustbox{width=\columnwidth}{\begin{tabular}{c|l|llllll}
\hline
\multicolumn{1}{l|}{\textbf{Dataset}} & \textbf{Baselines }        & \textbf{F1}    & \textbf{BLEU-1}  & \textbf{BLEU-2}  & \multicolumn{1}{l}{\textbf{BLEU-3}} & \multicolumn{1}{l}{\textbf{BLEU-4}} & \multicolumn{1}{l}{\textbf{BERTScore}} \\ \hline
\multirow{6}{*}{\textit{Hateful Meme}} & LLaVA ($k=0$)      & 58.44 & 0.090 & 0.010 & 0.010                     & 0.010                     & 0.889                          \\
                              & LLaVA ($k=2$)     & 46.55 & 0.120 & 0.020 & 0.010                     & 0.010                     & 0.864                          \\

& PaLiGemma ($k=0$)     & 45.51 & 0.110 & 0.020 & 0.010                     & 0.010                     & 0.856                          \\

& PaLiGemma ($k=2$)     & 47.89 & 0.170 & 0.040 & 0.020                     & 0.020                     & 0.878                          \\

                              & PaLiGemma (FT)     & 72.33 & \textbf{0.410} & 0.270 & 0.150                     & 0.090                     & 0.891                          \\
                              & LLaVA (FT) &   \textbf{72.38}    &   0.400   &  \textbf{0.270}    &      \textbf{0.170}                    &            \textbf{0.130}              &     \textbf{0.894}                           \\
                               \cmidrule(l){1-8}
\multirow{6}{*}{\textit{e-SNLI-VE}}    & LLaVA ($k=0$)      & 33.12 & 0.220 & 0.070 & 0.030                     & 0.020                     & 0.876                          \\
                              & LLaVA ($k=2$)  &  35.77  &   0.220    &   0.070   &   0.030   &    0.010                      &   0.869                                                       \\

                               & PaLiGemma ($k=0$)  &  38.78  &   0.270    &   0.090   &   0.030   &    0.010                      &   0.851                                                       \\

                                & PaLiGemma ($k=2$)  &  31.21  &   0.180    &   0.050   &   0.020   &    0.010                      &   0.861                                                       \\
                                
                              & PaLiGemma (FT)     &  \textbf{64.90}     & 0.190     &   0.040   &   0.010                       &      0.010                    &      \textbf{0.866}                          \\
                              & LLaVA (FT) &  64.29     &  \textbf{0.220}    & \textbf{0.080 }    &   \textbf{0.030}                       &              \textbf{0.020 }           &    0.859                            \\
                                \cmidrule(l){1-8}

\multirow{6}{*}{\textit{VQA-X}} & LLaVA ($k=0$)      & 88.12 & 0.045 & 0.022 & 0.011                     & 0.002                     & 0.642                          \\
                              & LLaVA ($k=2$)     &91.11  & 0.033 & 0.014 & 0.007                     & 0.004                     & 0.653                          \\

& PaLiGemma ($k=0$)     &35.05  & 0.004 & 0.001 & 0.001                     & 0.000                     & 0.653                          \\

& PaLiGemma ($k=2$)     &35.74  & 0.004 & 0.001 & 0.001                     & 0.000                     & 0.651                          \\

                              & PaLiGemma (FT)     & 94.08 & \textbf{0.112} & \textbf{0.034} & \textbf{0.026}                     & \textbf{0.024}                     & \textbf{0.712}                          \\
                              & LLaVA (FT) &  \textbf{94.11}     &   0.070   &  0.022    &      0.011                    &            0.008              &     0.679                           \\
                              \bottomrule              
\end{tabular}}
\caption{VLM-based baselines. FT denotes finetuned model.  $k=0/2$ shows $0/2$ shots prompting results.}
\label{tab:baselines}
\end{table}

\begin{figure*}
    \centering
    \includegraphics[width=\textwidth]{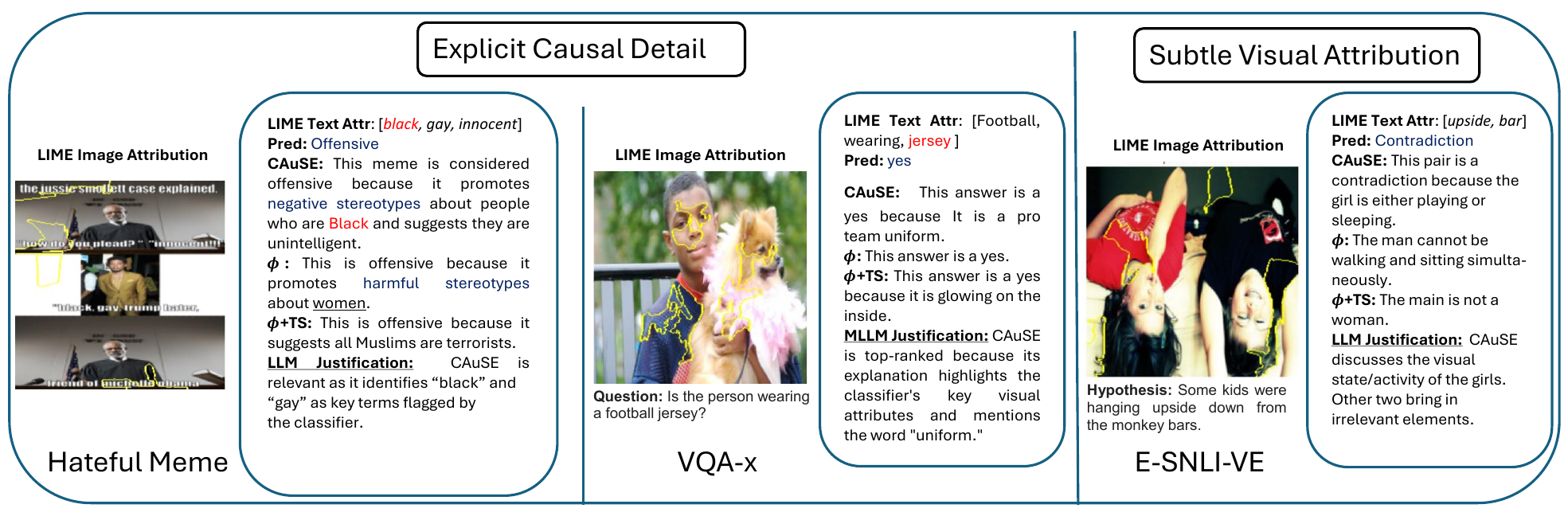}
    \caption{Representative dataset examples where CAuSE outperforms ablations in the LLM tournament.}
    \label{tab:llm_qualitative}
\end{figure*}

\begin{figure*}[h]
    \centering
    \small
    \includegraphics[width=\textwidth]{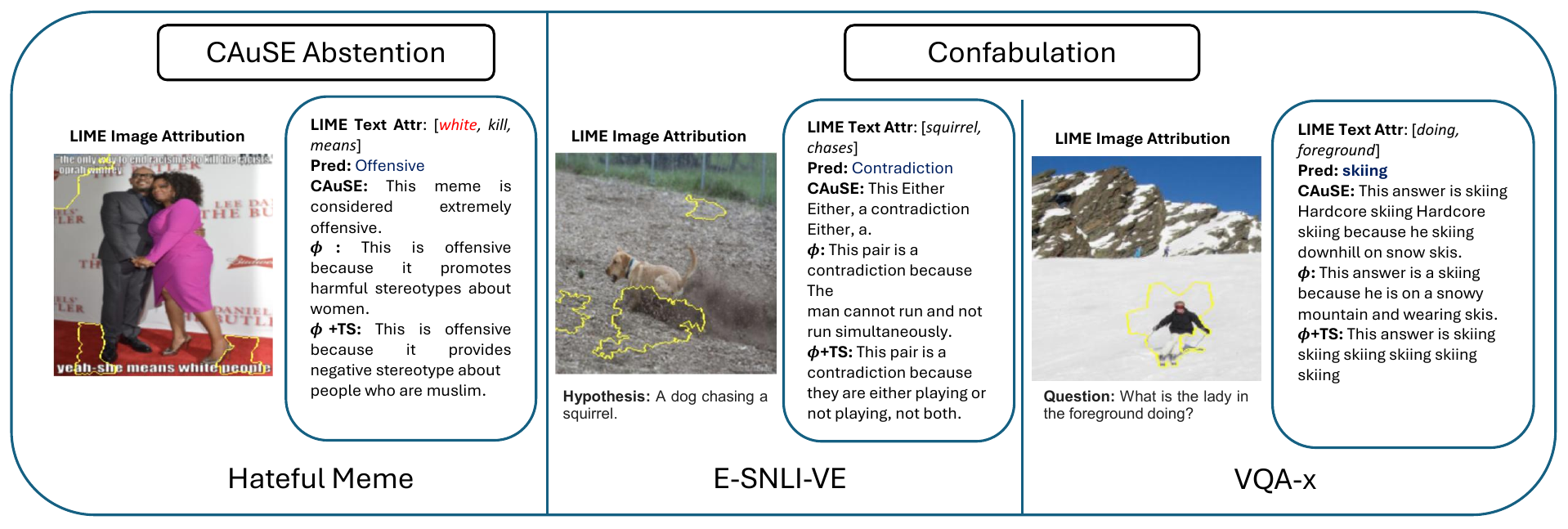}
    \caption{Representative dataset examples where CAuSE underperforms ablations in \textit{fluent}  and \textit{coherent} generation.}
    \label{tab:llm_qualitative2}
\end{figure*}

\begin{table*}[ht]
\centering
\renewcommand{\arraystretch}{1.7} 
\scriptsize

\resizebox{0.85\textwidth}{!}{\begin{tabular}{@{}p{0.15\textwidth} p{0.19\textwidth} p{0.17\textwidth} p{0.17\textwidth} p{0.19\textwidth} p{0.10\textwidth} p{0.10\textwidth}@{}}
\toprule
\textbf{Image} & \textbf{CAuSE} & $\phi$ only & $\phi$+TS & \textbf{Ground Truth (GT)} & \textbf{$M$} & \textbf{$\hat{y}$} \\
\midrule

\multirow{2}{=}{\centering\includegraphics[width=\linewidth]{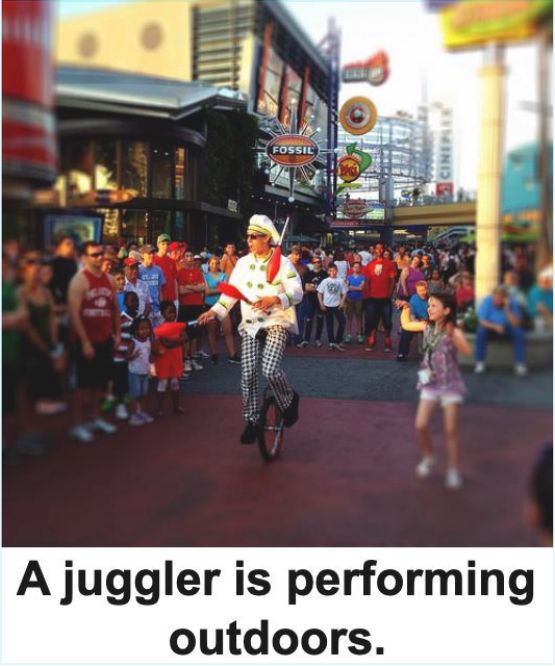}} 
& entailment because several people are standing in front of a building. 
& entailment because A man with a mustache is a type of man. 
& entailment because A man is a man. 
& A juggler is juggling clubs at an outdoor plaza. 
&  VisualBERT & Entailment \\ \cline{2-7}

& entailment because a man and running along a track is performing a stunt while jumping over a hurdle.
& entailment because A man is a man.
& entailment because The man is a man and the three people are the three people.
& A juggler is juggling clubs at an outdoor plaza.
& CLIP+MFB & Entailment \\
\midrule

\multirow{2}{=}{\centering\includegraphics[width=\linewidth]{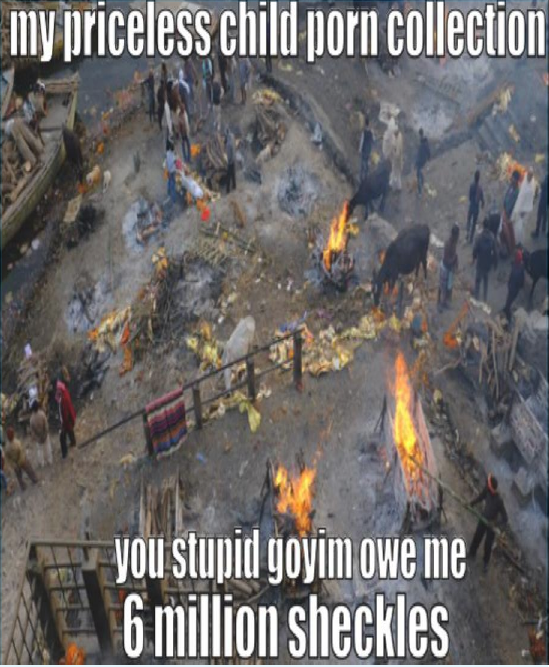}} 
& offensive because it promotes negative stereotypes about people who are Muslim.
& offensive because it promotes negative stereotypes about people who are Muslim.
& offensive because it promotes a harmful stereotype about Muslims.
& It promotes anti-Semitism and hatred towards Jewish people.
& VisualBERT & Offensive \\ \cline{2-7}

& This meme is offensive.
& This meme is offensive.
& This meme is offensive because it promotes a number of harmful stereotypes and prejudices.
& It promotes anti-Semitism and hatred towards Jewish people.
& CLIP+MFB & Offensive \\

\midrule

\multirow{2}{=}{\centering\includegraphics[width=\linewidth, height=2.9cm]{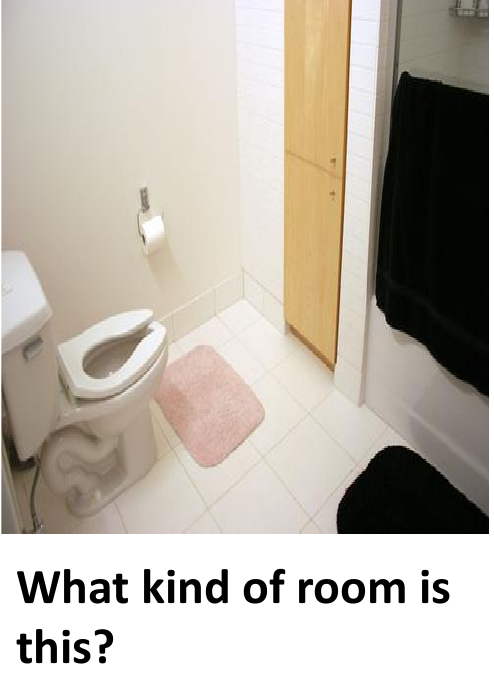}} 

& This answer bathroom is a bathroom because A toilet sits along the back wall. & This answer is a bathroom because There is a toilet and toilet paper. & This answer is a bathroom bath & This answer is a bathroom because I see a toilet and a tub. & VisualBERT & Bathroom

\\ \cline{2-7}

&The answer is bathroom because A toilet sits next to a sink. & The answer is bathroom because the toilet and sink are both in the room. & The answer is bathroom shower & This answer is a bathroom because I see a toilet and a tub. & CLIP+MFB & Bathroom \\
\bottomrule
\end{tabular}}
\caption{Error Analysis: These cases demonstrate errors that are prevalent in baselines as well as CAuSE. For all such cases, the classifier predicted class and the class predicted by the explainers are the same but the explanation does not \emph{completely} reflect the underlying scenario.}
\label{tab:error-analysis}
\end{table*}

\subsubsection{MLLM-as-a-Judge Evaluation}\label{llm-as-judge}

To better understand the relative strengths of CAuSE and its ablated variants, we adopt the \textit{MLLM-as-a-Judge} framework~\cite{chen2024mllmasajudgeassessingmultimodalllmasajudge}, which leverages a multimodal large language model (MLLM) to assess which of several candidate outputs best aligns with a reference model’s reasoning, which in this case is the frozen multimodal classifier $M$ (VisualBERT).

We conduct a tournament-style evaluation where the MLLM ranks natural language explanations generated by:
(i) \textit{CAuSE}: CAuSE framework with the full loss $\mathcal{L}_{CAuSE}$ ,
(ii) $\phi+TS$: CAuSE framework trained with $\mathcal{L}_\phi + \mathcal{L}_{TS}$ (i.e., without $\mathcal{L}_{IIT}$), and
(iii) $\phi$: CAuSE framework trained with $\mathcal{L}_\phi$ only.

For each test example, we first compute the classifier’s LIME~\cite{ribeiro2016whyitrustyou} attributions, highlighting the most influential text tokens and image regions used by $M$ to make its prediction. The MLLM is then given the input, the classifier’s predicted label, and the corresponding LIME attributions, along with the three candidate explanations. It is asked to rank the candidates based on how well they reflect the classifier’s prediction and saliency map.

Ranking-based scores are then assigned as follows:

(i) If the ranking is $CAuSE > \phi + TS > \phi$, the assigned scores are 2, 1, and 0, respectively. (ii) If $CAuSE = \phi + TS > \phi$, the scores are 1.5, 1.5, and 0. (iii) If $CAuSE > \phi + TS = \phi$, the scores are 2, 0.5, and 0.5, and (iv) If all explanations are judged equally good, each method receives a score of 1.

Final tournament scores are obtained by aggregating these per-sample scores across the entire test set. As shown in Table~\ref{win-rate}, CAuSE with the full loss consistently outperforms its ablations across all three datasets.

\begin{table}[h]
\centering
\scriptsize
\begin{tabular}{lccc}
\toprule
\textbf{Dataset} & \textbf{CAuSE} & \textbf{$\phi$ only} & \textbf{$\phi$+TS} \\
\midrule
HM (VisualBERT)    & \textbf{1.08} & 0.98  & 0.93  \\
eSNLI (VisualBERT) & \textbf{1.15} & 0.96 & 0.67 \\
VQA-X (VisualBERT) & \textbf{1.01} & 0.87 & 0.85 \\
\bottomrule
\end{tabular}
\caption{MLLM Tournament scores for the explainers on HM and eSNLI, and VQA-X datasets.}\label{win-rate}
\end{table}

\subsubsection{Qualitative Examples from the MLLM Tournament} \label{quals-disc}
To make it clear \emph{why} CAuSE wins, Figure~\ref{tab:llm_qualitative} shows three representative examples (\emph{with} $M$ as VisualBERT) where CAuSE was judged best by our blind MLLM critic. Each row gives: a LIME image heatmap attribution, the top important words from the text as per LIME, the classifier's prediction, three candidate explanations, and the MLLM's free-text justification. We can clearly observe that CAuSE can capture most of the nuances of the classifier attribution, denoted as \emph{Explicit causal detail}, and also point out the visual cues while generating the explanation, denoted as \emph{Subtle visual attribution}. For instance, CAuSE identifies that the girl is playing in the third image, which aligns with the LIME image attribution, highlighting the girl's hands in a playful mode.

\subsubsection{Failure Modes of CAuSE}\label{failure-mode}

In many scenarios, when subtle reasoning is needed which is not obvious from both textual and visual inputs, CAuSE fails to produce a coherent explanation. In Figure \ref{tab:llm_qualitative2}, we show three such cases. 

\paragraph{Illustrative Examples.}

In the first example, due to the implicit nature of offense (racism), CAuSE ablations fail to accurately reason why the meme is offensive (implicit racism), although correctly highlighting it was closely related to "stereotype". CAuSE does not explain anything at all except that the meme is offensive.

In the second example, the hypothesis states: "Dog chasing a squirrel." The image does not clearly indicate whether the dog is actually chasing a squirrel, making the ground-truth label inherently ambiguous. Although the classifier predicts contradiction, it does so with low confidence (probability of 0.52). As a causal abstraction of the classifier, CAuSE reflects this uncertainty in its explanation by using terms like "Either". However, it ultimately fails to generate a coherent explanation. In contrast, the other two variants, although mostly incorrect (especially $\phi$), manage to produce a more plausible explanation. The third example demonstrates CAuSE's poor fluency. Although its reasoning is correct and plausible, the explanation contains grammatical mistakes and is a clear case of confabulation. In contrast, $\phi$ generates more natural and fluent text.

\subsection{Generic Error Analysis}\label{error-analysis}

In Table \ref{tab:error-analysis}, we selected three examples, one from each dataset, to illustrate two broad categories of cases where both baselines and CAuSE fail in generating logically valid explanations.

\textbf{Lack of fine-grained representation.} 
In the first two examples from e-SNLI-VE dataset, the hypothesis states: “A juggler is performing outdoors,” and the premise is entailed, as supported by the ground-truth explanation: “A juggler is juggling clubs at an outdoor plaza.” While CAuSE and its counterparts correctly match the classifier’s prediction (both for VisualBERT and CLIP+MFB), the generated explanations fall short; CAuSE omits key details (e.g., the word “juggling”), and others produce incorrect outputs. These errors likely stem from limited object-level detail in the initial representation $c$ used by CAuSE. 
The last two examples from the VQA-X dataset exhibit a similar issue. Although CAuSE and comparable methods produce predictions consistent with the ground truth and generate largely accurate explanations, they fail to attribute the label to the presence of the bathtub, which is only faintly visible in the input image.

\textbf{Implicit semantic category.} In the third and fourth examples (from Hateful Meme dataset), although CAuSE correctly predicts the output class as offensive, it does so for the wrong reasons. This is true for other counterparts as well. Due to the explicit content in the meme, all the models, including CAuSE diagnose that the meme is offensive based on general stereotypes, but they could not point out "antisemitism", as neither the image nor the text explicitly conveys the historical context of the Holocaust, where six million Jews were killed. Without this implicit context, no model can pinpoint antisemitism present in this meme.

\section{Related Work}

\textbf{Interpretability}. Interpretability is crucial for building trust in AI systems within human society. Techniques like LIME, SHAP and RISE \citep{ribeiro2016whyitrustyou, lundberg2017unified, petsiuk2018riserandomizedinputsampling} explain classifier predictions by providing feature-level explanations for local interpretability. Although model-agnostic, these methods lack global interpretability, which is addressed by GALE \cite{vanderlinden2019globalaggregationslocalexplanations}, where local explanations are aggregated into a global model understanding. Approaches like SmoothGrad \cite{smilkov2017smoothgradremovingnoiseadding} and Integrated Gradients \citep{sundararajan2017axiomaticattributiondeepnetworks} utilize input gradients for model explanation, while CAM \cite{zhou2015learningdeepfeaturesdiscriminative} highlights critical pixels for decision making in visual classification. Counterfactual generations \citep{chang2019explainingimageclassifierscounterfactual, Mothilal_2020, goyal2019counterfactualvisualexplanations} also offer insights into the inner working of the model by revealing decision boundaries. However, most of these methods often overlook implicit features behind model decisions and lack natural language explanations. To address these limitations, we propose a novel framework for classifier explanation which generates both \textit{faithful} and \textit{plausible} (human-like) natural language outputs.

\textbf{Causal Interpretability}. Causal interpretability refers to the ability to explain a model's decisions by identifying the cause-effect relationships between input features and the model’s output. \citet{feder-etal-2022-causal} demonstrated how incorporating causal reasoning in NLP tasks can improve model predictions and enhance interpretability by going beyond simple correlations between input features and outputs. Further works by \cite{DBLP:journals/corr/abs-2106-02997,vig2020causal,meng2023locating} have focused on causal abstraction and causal mediation analysis, helping to create causally faithful models and identify both direct and indirect causal factors behind certain model behaviors. In addition to generating counterfactuals, testing models on counterfactual inputs is another critical aspect of understanding model behavior. Since creating exact counterfactuals is challenging, \cite{abraham2022cebab,calderon-etal-2022-docogen}, recent research has focused on approximations \cite{DBLP:journals/corr/abs-2106-02997} or counterfactual representations \cite{feder-etal-2021-causalm,elazar2021amnesic,ravfogel-etal-2021-counterfactual}. Our proposed counterfactual metric CCMR is inspired by these counterfactual representations. Moreover, most of the existing works focus on single modality \cite{feder-etal-2021-causalm,goyal2020explaining}. In contrast, the natural language explanation provided by our framework is model and task-agnostic, and capable of handling multimodal inputs.

\section{Conclusion and Future Work}

In this paper, we presented \textit{CAuSE} (Causal Abstraction under Simulated Explanation), a novel framework for generating causally faithful natural language explanations for discriminative multimodal classifiers. By integrating \textit{Interchange Intervention Training} (IIT) with a Language Model (LM) based module, CAuSE addresses the limitations of existing interpretability methods, ensuring explanations are directly tied to the classifier’s causal reasoning. Our proposed CCMR score highlights CAuSE's state-of-the-art performance on datasets like e-SNLI-VE, Hateful Memes, and VQA-X.

While CAuSE demonstrates robust task-agnostic performance, future work will focus on enhancing fine-grained object-level representations and extending the framework to temporal data, such as video and audio. Additionally, we aim to explore how \textit{self-supervised} learning and deeper integration of implicit cultural knowledge can further improve the framework’s scalability and contextual understanding in real-world applications. Currently, CAuSE is tailored for discriminative classifiers and adapting it for generative models is an area for future research.

\section*{Acknowledgements}
Dibyanayan Bandyopadhyay acknowledges support from the Prime Minister's Research Fellowship (PMRF). The authors also thank the anonymous reviewers and the Action Editors for their constructive and valuable feedback, which helped improve the quality of this work.

\bibliography{tacl2021}

@inproceedings{papineni-etal-2002-bleu,
    title = "{B}leu: a Method for Automatic Evaluation of Machine Translation",
    author = "Papineni, Kishore  and
      Roukos, Salim  and
      Ward, Todd  and
      Zhu, Wei-Jing",
    editor = "Isabelle, Pierre  and
      Charniak, Eugene  and
      Lin, Dekang",
    booktitle = "Proceedings of the 40th Annual Meeting of the Association for Computational Linguistics",
    month = jul,
    year = "2002",
    address = "Philadelphia, Pennsylvania, USA",
    publisher = "Association for Computational Linguistics",
    url = "https://aclanthology.org/P02-1040",
    doi = "10.3115/1073083.1073135",
    pages = "311--318",
}

@inproceedings{
zhang2020bertscoreevaluatingtextgeneration,
title={BERTScore: Evaluating Text Generation with BERT},
author={Tianyi Zhang* and Varsha Kishore* and Felix Wu* and Kilian Q. Weinberger and Yoav Artzi},
booktitle={International Conference on Learning Representations},
year={2020},
url={https://openreview.net/forum?id=SkeHuCVFDr}
}

@misc{beyer2024paligemmaversatile3bvlm,
      title={PaliGemma: A versatile 3B VLM for transfer}, 
      author={Lucas Beyer and Andreas Steiner and André Susano Pinto and Alexander Kolesnikov and Xiao Wang and Daniel Salz and Maxim Neumann and Ibrahim Alabdulmohsin and Michael Tschannen and Emanuele Bugliarello and Thomas Unterthiner and Daniel Keysers and Skanda Koppula and Fangyu Liu and Adam Grycner and Alexey Gritsenko and Neil Houlsby and Manoj Kumar and Keran Rong and Julian Eisenschlos and Rishabh Kabra and Matthias Bauer and Matko Bošnjak and Xi Chen and Matthias Minderer and Paul Voigtlaender and Ioana Bica and Ivana Balazevic and Joan Puigcerver and Pinelopi Papalampidi and Olivier Henaff and Xi Xiong and Radu Soricut and Jeremiah Harmsen and Xiaohua Zhai},
      year={2024},
      eprint={2407.07726},
      archivePrefix={arXiv},
      primaryClass={cs.CV},
      url={https://arxiv.org/abs/2407.07726}, 
}

@inproceedings{
liu2023visualinstructiontuning,
title={Visual Instruction Tuning},
author={Haotian Liu and Chunyuan Li and Qingyang Wu and Yong Jae Lee},
booktitle={Thirty-seventh Conference on Neural Information Processing Systems},
year={2023},
url={https://openreview.net/forum?id=w0H2xGHlkw}
}

@misc{do2021esnlivecorrectedvisualtextualentailment,
      title={e-SNLI-VE: Corrected Visual-Textual Entailment with Natural Language Explanations}, 
      author={Virginie Do and Oana-Maria Camburu and Zeynep Akata and Thomas Lukasiewicz},
      year={2021},
      eprint={2004.03744},
      archivePrefix={arXiv},
      primaryClass={cs.CL},
      url={https://arxiv.org/abs/2004.03744}, 
}

@inproceedings{
hu2021loralowrankadaptationlarge,
title={Lo{RA}: Low-Rank Adaptation of Large Language Models},
author={Edward J Hu and yelong shen and Phillip Wallis and Zeyuan Allen-Zhu and Yuanzhi Li and Shean Wang and Lu Wang and Weizhu Chen},
booktitle={International Conference on Learning Representations},
year={2022},
url={https://openreview.net/forum?id=nZeVKeeFYf9}
}

@inproceedings{kiela2021hateful,
author = {Kiela, Douwe and Firooz, Hamed and Mohan, Aravind and Goswami, Vedanuj and Singh, Amanpreet and Ringshia, Pratik and Testuggine, Davide},
title = {The hateful memes challenge: detecting hate speech in multimodal memes},
year = {2020},
isbn = {9781713829546},
publisher = {Curran Associates Inc.},
address = {Red Hook, NY, USA},
abstract = {This work proposes a new challenge set for multimodal classification, focusing on detecting hate speech in multimodal memes. It is constructed such that unimodal models struggle and only multimodal models can succeed: difficult examples ("benign confounders") are added to the dataset to make it hard to rely on unimodal signals. The task requires subtle reasoning, yet is straightforward to evaluate as a binary classification problem. We provide baseline performance numbers for unimodal models, as well as for multimodal models with various degrees of sophistication. We find that state-of-the-art methods perform poorly compared to humans, illustrating the difficulty of the task and highlighting the challenge that this important problem poses to the community.},
booktitle = {Proceedings of the 34th International Conference on Neural Information Processing Systems},
articleno = {220},
numpages = {14},
location = {Vancouver, BC, Canada},
series = {NIPS '20}
}

@inproceedings{madsen2024selfexplanations,
    title = "Are self-explanations from Large Language Models faithful?",
    author = "Madsen, Andreas  and
      Chandar, Sarath  and
      Reddy, Siva",
    editor = "Ku, Lun-Wei  and
      Martins, Andre  and
      Srikumar, Vivek",
    booktitle = "Findings of the Association for Computational Linguistics: ACL 2024",
    month = aug,
    year = "2024",
    address = "Bangkok, Thailand",
    publisher = "Association for Computational Linguistics",
    url = "https://aclanthology.org/2024.findings-acl.19/",
    doi = "10.18653/v1/2024.findings-acl.19",
    pages = "295--337",
    abstract = "Instruction-tuned Large Language Models (LLMs) excel at many tasks and will even explain their reasoning, so-called self-explanations. However, convincing and wrong self-explanations can lead to unsupported confidence in LLMs, thus increasing risk. Therefore, it{'}s important to measure if self-explanations truly reflect the model{'}s behavior. Such a measure is called interpretability-faithfulness and is challenging to perform since the ground truth is inaccessible, and many LLMs only have an inference API. To address this, we propose employing self-consistency checks to measure faithfulness. For example, if an LLM says a set of words is important for making a prediction, then it should not be able to make its prediction without these words. While self-consistency checks are a common approach to faithfulness, they have not previously been successfully applied to LLM self-explanations for counterfactual, feature attribution, and redaction explanations. Our results demonstrate that faithfulness is explanation, model, and task-dependent, showing self-explanations should not be trusted in general. For example, with sentiment classification, counterfactuals are more faithful for Llama2, feature attribution for Mistral, and redaction for Falcon 40B."
}

@article{DBLP:journals/corr/abs-2106-02997,
  author       = {Atticus Geiger and
                  Hanson Lu and
                  Thomas Icard and
                  Christopher Potts},
  title        = {Causal Abstractions of Neural Networks},
  journal      = {CoRR},
  volume       = {abs/2106.02997},
  year         = {2021},
  url          = {https://arxiv.org/abs/2106.02997},
  eprinttype    = {arXiv},
  eprint       = {2106.02997},
  timestamp    = {Thu, 10 Jun 2021 16:34:18 +0200},
  biburl       = {https://dblp.org/rec/journals/corr/abs-2106-02997.bib},
  bibsource    = {dblp computer science bibliography, https://dblp.org}
}

@InProceedings{chattopadhyay2019neural,
  title = 	 {Neural Network Attributions: A Causal Perspective},
  author =       {Chattopadhyay, Aditya and Manupriya, Piyushi and Sarkar, Anirban and Balasubramanian, Vineeth N},
  booktitle = 	 {Proceedings of the 36th International Conference on Machine Learning},
  pages = 	 {981--990},
  year = 	 {2019},
  editor = 	 {Chaudhuri, Kamalika and Salakhutdinov, Ruslan},
  volume = 	 {97},
  series = 	 {Proceedings of Machine Learning Research},
  month = 	 {09--15 Jun},
  publisher =    {PMLR},
  pdf = 	 {http://proceedings.mlr.press/v97/chattopadhyay19a/chattopadhyay19a.pdf},
  url = 	 {https://proceedings.mlr.press/v97/chattopadhyay19a.html},
  abstract = 	 {We propose a new attribution method for neural networks developed using ﬁrst principles of causality (to the best of our knowledge, the ﬁrst such). The neural network architecture is viewed as a Structural Causal Model, and a methodology to compute the causal effect of each feature on the output is presented. With reasonable assumptions on the causal structure of the input data, we propose algorithms to efﬁciently compute the causal effects, as well as scale the approach to data with large dimensionality. We also show how this method can be used for recurrent neural networks. We report experimental results on both simulated and real datasets showcasing the promise and usefulness of the proposed algorithm.}
}

@inproceedings{
meng2023locating,
title={Locating and Editing Factual Associations in {GPT}},
author={Kevin Meng and David Bau and Alex J Andonian and Yonatan Belinkov},
booktitle={Advances in Neural Information Processing Systems},
editor={Alice H. Oh and Alekh Agarwal and Danielle Belgrave and Kyunghyun Cho},
year={2022},
url={https://openreview.net/forum?id=-h6WAS6eE4}
}

@inproceedings{vig2020causal,
 author = {Vig, Jesse and Gehrmann, Sebastian and Belinkov, Yonatan and Qian, Sharon and Nevo, Daniel and Singer, Yaron and Shieber, Stuart},
 booktitle = {Advances in Neural Information Processing Systems},
 editor = {H. Larochelle and M. Ranzato and R. Hadsell and M.F. Balcan and H. Lin},
 pages = {12388--12401},
 publisher = {Curran Associates, Inc.},
 title = {Investigating Gender Bias in Language Models Using Causal Mediation Analysis},
 url = {https://proceedings.neurips.cc/paper_files/paper/2020/file/92650b2e92217715fe312e6fa7b90d82-Paper.pdf},
 volume = {33},
 year = {2020}
}

@article{feder-etal-2022-causal,
    title = "Causal Inference in Natural Language Processing: Estimation, Prediction, Interpretation and Beyond",
    author = "Feder, Amir  and
      Keith, Katherine A.  and
      Manzoor, Emaad  and
      Pryzant, Reid  and
      Sridhar, Dhanya  and
      Wood-Doughty, Zach  and
      Eisenstein, Jacob  and
      Grimmer, Justin  and
      Reichart, Roi  and
      Roberts, Margaret E.  and
      Stewart, Brandon M.  and
      Veitch, Victor  and
      Yang, Diyi",
    editor = "Roark, Brian  and
      Nenkova, Ani",
    journal = "Transactions of the Association for Computational Linguistics",
    volume = "10",
    year = "2022",
    address = "Cambridge, MA",
    publisher = "MIT Press",
    url = "https://aclanthology.org/2022.tacl-1.66",
    doi = "10.1162/tacl_a_00511",
    pages = "1138--1158",
    abstract = "A fundamental goal of scientific research is to learn about causal relationships. However, despite its critical role in the life and social sciences, causality has not had the same importance in Natural Language Processing (NLP), which has traditionally placed more emphasis on predictive tasks. This distinction is beginning to fade, with an emerging area of interdisciplinary research at the convergence of causal inference and language processing. Still, research on causality in NLP remains scattered across domains without unified definitions, benchmark datasets and clear articulations of the challenges and opportunities in the application of causal inference to the textual domain, with its unique properties. In this survey, we consolidate research across academic areas and situate it in the broader NLP landscape. We introduce the statistical challenge of estimating causal effects with text, encompassing settings where text is used as an outcome, treatment, or to address confounding. In addition, we explore potential uses of causal inference to improve the robustness, fairness, and interpretability of NLP models. We thus provide a unified overview of causal inference for the NLP community.1",
}

@inproceedings{abraham2022cebab,
author = {Abraham, Eldar David and D'Oosterlink, Karel and Feder, Amir and Gat, Yair and Geiger, Atticus and Potts, Christopher and Reichart, Roi and Wu, Zhengxuan},
title = {CEBaB: estimating the causal effects of real-world concepts on NLP model behavior},
year = {2022},
isbn = {9781713871088},
publisher = {Curran Associates Inc.},
address = {Red Hook, NY, USA},
abstract = {The increasing size and complexity of modern ML systems has improved their predictive capabilities but made their behavior harder to explain. Many techniques for model explanation have been developed in response, but we lack clear criteria for assessing these techniques. In this paper, we cast model explanation as the causal inference problem of estimating causal effects of real-world concepts on the output behavior of ML models given actual input data. We introduce CEBaB, a new benchmark dataset for assessing concept-based explanation methods in Natural Language Processing (NLP). CEBaB consists of short restaurant reviews with human-generated counterfactual reviews in which an aspect (food, noise, ambiance, service) of the dining experience was modified. Original and counterfactual reviews are annotated with multiply-validated sentiment ratings at the aspect-level and review-level. The rich structure of CEBaB allows us to go beyond input features to study the effects of abstract, real-world concepts on model behavior. We use CEBaB to compare the quality of a range of concept-based explanation methods covering different assumptions and conceptions of the problem, and we seek to establish natural metrics for comparative assessments of these methods.},
booktitle = {Proceedings of the 36th International Conference on Neural Information Processing Systems},
articleno = {1278},
numpages = {15},
location = {New Orleans, LA, USA},
series = {NIPS '22}
}

@article{feder-etal-2021-causalm,
    title = "{C}ausa{LM}: Causal Model Explanation Through Counterfactual Language Models",
    author = "Feder, Amir  and
      Oved, Nadav  and
      Shalit, Uri  and
      Reichart, Roi",
    journal = "Computational Linguistics",
    volume = "47",
    number = "2",
    month = jun,
    year = "2021",
    address = "Cambridge, MA",
    publisher = "MIT Press",
    url = "https://aclanthology.org/2021.cl-2.13",
    doi = "10.1162/coli_a_00404",
    pages = "333--386",
    abstract = "Understanding predictions made by deep neural networks is notoriously difficult, but also crucial to their dissemination. As all machine learning{--}based methods, they are as good as their training data, and can also capture unwanted biases. While there are tools that can help understand whether such biases exist, they do not distinguish between correlation and causation, and might be ill-suited for text-based models and for reasoning about high-level language concepts. A key problem of estimating the causal effect of a concept of interest on a given model is that this estimation requires the generation of counterfactual examples, which is challenging with existing generation technology. To bridge that gap, we propose CausaLM, a framework for producing causal model explanations using counterfactual language representation models. Our approach is based on fine-tuning of deep contextualized embedding models with auxiliary adversarial tasks derived from the causal graph of the problem. Concretely, we show that by carefully choosing auxiliary adversarial pre-training tasks, language representation models such as BERT can effectively learn a counterfactual representation for a given concept of interest, and be used to estimate its true causal effect on model performance. A byproduct of our method is a language representation model that is unaffected by the tested concept, which can be useful in mitigating unwanted bias ingrained in the data.1",
}

@article{elazar2021amnesic,
    author = {Elazar, Yanai and Ravfogel, Shauli and Jacovi, Alon and Goldberg, Yoav},
    title = {Amnesic Probing: Behavioral Explanation with Amnesic Counterfactuals},
    journal = {Transactions of the Association for Computational Linguistics},
    volume = {9},
    pages = {160-175},
    year = {2021},
    month = {03},
    abstract = {A growing body of work makes use of probing in order to investigate the working of neural models, often considered black boxes. Recently, an ongoing debate emerged surrounding the limitations of the probing paradigm. In this work, we point out the inability to infer behavioral conclusions from probing results, and offer an alternative method that focuses on how the information is being used, rather than on what information is encoded. Our method, Amnesic Probing, follows the intuition that the utility of a property for a given task can be assessed by measuring the influence of a causal intervention that removes it from the representation. Equipped with this new analysis tool, we can ask questions that were not possible before, for example, is part-of-speech information important for word prediction? We perform a series of analyses on BERT to answer these types of questions. Our findings demonstrate that conventional probing performance is not correlated to task importance, and we call for increased scrutiny of claims that draw behavioral or causal conclusions from probing results.1},
    issn = {2307-387X},
    doi = {10.1162/tacl_a_00359},
    url = {https://doi.org/10.1162/tacl\_a\_00359},
    eprint = {https://direct.mit.edu/tacl/article-pdf/doi/10.1162/tacl\_a\_00359/1924189/tacl\_a\_00359.pdf},
}

@InProceedings{goyal2019counterfactualvisualexplanations,
  title = 	 {Counterfactual Visual Explanations},
  author =       {Goyal, Yash and Wu, Ziyan and Ernst, Jan and Batra, Dhruv and Parikh, Devi and Lee, Stefan},
  booktitle = 	 {Proceedings of the 36th International Conference on Machine Learning},
  pages = 	 {2376--2384},
  year = 	 {2019},
  editor = 	 {Chaudhuri, Kamalika and Salakhutdinov, Ruslan},
  volume = 	 {97},
  series = 	 {Proceedings of Machine Learning Research},
  month = 	 {09--15 Jun},
  publisher =    {PMLR},
  pdf = 	 {http://proceedings.mlr.press/v97/goyal19a/goyal19a.pdf},
  url = 	 {https://proceedings.mlr.press/v97/goyal19a.html},
  abstract = 	 {In this work, we develop a technique to produce counterfactual visual explanations. Given a ‘query’ image $I$ for which a vision system predicts class $c$, a counterfactual visual explanation identifies how $I$ could change such that the system would output a different specified class $c’$. To do this, we select a ‘distractor’ image $I’$ that the system predicts as class $c’$ and identify spatial regions in $I$ and $I’$ such that replacing the identified region in $I$ with the identified region in $I’$ would push the system towards classifying $I$ as $c’$. We apply our approach to multiple image classification datasets generating qualitative results showcasing the interpretability and discriminativeness of our counterfactual explanations. To explore the effectiveness of our explanations in teaching humans, we present machine teaching experiments for the task of fine-grained bird classification. We find that users trained to distinguish bird species fare better when given access to counterfactual explanations in addition to training examples.}
}

@misc{goyal2020explaining,
      title={Explaining Classifiers with Causal Concept Effect (CaCE)}, 
      author={Yash Goyal and Amir Feder and Uri Shalit and Been Kim},
      year={2020},
      eprint={1907.07165},
      archivePrefix={arXiv},
      primaryClass={cs.LG}
}

@inproceedings{ravfogel-etal-2021-counterfactual,
    title = "Counterfactual Interventions Reveal the Causal Effect of Relative Clause Representations on Agreement Prediction",
    author = "Ravfogel, Shauli  and
      Prasad, Grusha  and
      Linzen, Tal  and
      Goldberg, Yoav",
    editor = "Bisazza, Arianna  and
      Abend, Omri",
    booktitle = "Proceedings of the 25th Conference on Computational Natural Language Learning",
    month = nov,
    year = "2021",
    address = "Online",
    publisher = "Association for Computational Linguistics",
    url = "https://aclanthology.org/2021.conll-1.15",
    doi = "10.18653/v1/2021.conll-1.15",
    pages = "194--209",
    abstract = "When language models process syntactically complex sentences, do they use their representations of syntax in a manner that is consistent with the grammar of the language? We propose AlterRep, an intervention-based method to address this question. For any linguistic feature of a given sentence, AlterRep generates counterfactual representations by altering how the feature is encoded, while leaving in- tact all other aspects of the original representation. By measuring the change in a model{'}s word prediction behavior when these counterfactual representations are substituted for the original ones, we can draw conclusions about the causal effect of the linguistic feature in question on the model{'}s behavior. We apply this method to study how BERT models of different sizes process relative clauses (RCs). We find that BERT variants use RC boundary information during word prediction in a manner that is consistent with the rules of English grammar; this RC boundary information generalizes to a considerable extent across different RC types, suggesting that BERT represents RCs as an abstract linguistic category.",
}

@misc{li2019visualbert,
      title={VisualBERT: A Simple and Performant Baseline for Vision and Language}, 
      author={Liunian Harold Li and Mark Yatskar and Da Yin and Cho-Jui Hsieh and Kai-Wei Chang},
      year={2019},
      eprint={1908.03557},
      archivePrefix={arXiv},
      primaryClass={cs.CV}
}

@inproceedings{calderon-etal-2022-docogen,
    title = "{D}o{C}o{G}en: {D}omain Counterfactual Generation for Low Resource Domain Adaptation",
    author = "Calderon, Nitay  and
      Ben-David, Eyal  and
      Feder, Amir  and
      Reichart, Roi",
    editor = "Muresan, Smaranda  and
      Nakov, Preslav  and
      Villavicencio, Aline",
    booktitle = "Proceedings of the 60th Annual Meeting of the Association for Computational Linguistics (Volume 1: Long Papers)",
    month = may,
    year = "2022",
    address = "Dublin, Ireland",
    publisher = "Association for Computational Linguistics",
    url = "https://aclanthology.org/2022.acl-long.533/",
    doi = "10.18653/v1/2022.acl-long.533",
    pages = "7727--7746",
    abstract = "Natural language processing (NLP) algorithms have become very successful, but they still struggle when applied to out-of-distribution examples. In this paper we propose a controllable generation approach in order to deal with this domain adaptation (DA) challenge. Given an input text example, our DoCoGen algorithm generates a domain-counterfactual textual example (D-con) - that is similar to the original in all aspects, including the task label, but its domain is changed to a desired one. Importantly, DoCoGen is trained using only unlabeled examples from multiple domains - no NLP task labels or parallel pairs of textual examples and their domain-counterfactuals are required. We show that DoCoGen can generate coherent counterfactuals consisting of multiple sentences. We use the D-cons generated by DoCoGen to augment a sentiment classifier and a multi-label intent classifier in 20 and 78 DA setups, respectively, where source-domain labeled data is scarce. Our model outperforms strong baselines and improves the accuracy of a state-of-the-art unsupervised DA algorithm."
}

@inproceedings{feng-etal-2021-language,
    title = "Language Model as an Annotator: Exploring {D}ialo{GPT} for Dialogue Summarization",
    author = "Feng, Xiachong  and
      Feng, Xiaocheng  and
      Qin, Libo  and
      Qin, Bing  and
      Liu, Ting",
    booktitle = "Proceedings of the 59th Annual Meeting of the Association for Computational Linguistics and the 11th International Joint Conference on Natural Language Processing (Volume 1: Long Papers)",
    month = aug,
    year = "2021",
    address = "Online",
    publisher = "Association for Computational Linguistics",
    url = "https://aclanthology.org/2021.acl-long.117",
    doi = "10.18653/v1/2021.acl-long.117",
    pages = "1479--1491",
    abstract = "Current dialogue summarization systems usually encode the text with a number of general semantic features (e.g., keywords and topics) to gain more powerful dialogue modeling capabilities. However, these features are obtained via open-domain toolkits that are dialog-agnostic or heavily relied on human annotations. In this paper, we show how DialoGPT, a pre-trained model for conversational response generation, can be developed as an unsupervised dialogue annotator, which takes advantage of dialogue background knowledge encoded in DialoGPT. We apply DialoGPT to label three types of features on two dialogue summarization datasets, SAMSum and AMI, and employ pre-trained and non pre-trained models as our summarizers. Experimental results show that our proposed method can obtain remarkable improvements on both datasets and achieves new state-of-the-art performance on the SAMSum dataset.",
}

@inproceedings{schick-schutze-2021-generating,
    title = "Generating Datasets with Pretrained Language Models",
    author = {Schick, Timo  and
      Sch{\"u}tze, Hinrich},
    booktitle = "Proceedings of the 2021 Conference on Empirical Methods in Natural Language Processing",
    month = nov,
    year = "2021",
    address = "Online and Punta Cana, Dominican Republic",
    publisher = "Association for Computational Linguistics",
    url = "https://aclanthology.org/2021.emnlp-main.555",
    doi = "10.18653/v1/2021.emnlp-main.555",
    pages = "6943--6951",
    abstract = "To obtain high-quality sentence embeddings from pretrained language models (PLMs), they must either be augmented with additional pretraining objectives or finetuned on a large set of labeled text pairs. While the latter approach typically outperforms the former, it requires great human effort to generate suitable datasets of sufficient size. In this paper, we show how PLMs can be leveraged to obtain high-quality sentence embeddings without the need for labeled data, finetuning or modifications to the pretraining objective: We utilize the generative abilities of large and high-performing PLMs to generate entire datasets of labeled text pairs from scratch, which we then use for finetuning much smaller and more efficient models. Our fully unsupervised approach outperforms strong baselines on several semantic textual similarity datasets.",
}

@article{desai2021nice, title={Nice Perfume. How Long Did You Marinate in It? Multimodal Sarcasm Explanation}, volume={36}, url={https://ojs.aaai.org/index.php/AAAI/article/view/21300}, DOI={10.1609/aaai.v36i10.21300}, abstractNote={Sarcasm is a pervading linguistic phenomenon and highly challenging to explain due to its subjectivity, lack of context and deeply-felt opinion. In the multimodal setup, sarcasm is conveyed through the incongruity between the text and visual entities. Although recent approaches deal with sarcasm as a classification problem, it is unclear why an online post is identified as sarcastic. Without proper explanation, end users may not be able to perceive the underlying sense of irony. In this paper, we propose a novel problem -- Multimodal Sarcasm Explanation (MuSE) -- given a multimodal sarcastic post containing an image and a caption, we aim to generate a natural language explanation to reveal the intended sarcasm. To this end, we develop MORE, a new dataset with explanation of 3510 sarcastic multimodal posts. Each explanation is a natural language (English) sentence describing the hidden irony. We benchmark MORE by employing a multimodal Transformer-based architecture. It incorporates a cross-modal attention in the Transformer’s encoder which attends to the distinguishing features between the two modalities. Subsequently, a BART-based auto-regressive decoder is used as the generator. Empirical results demonstrate convincing results over various baselines (adopted for MuSE) across five evaluation metrics. We also conduct human evaluation on predictions and obtain Fleiss’ Kappa score of 0.4 as a fair agreement among 25 evaluators.}, number={10}, journal={Proceedings of the AAAI Conference on Artificial Intelligence}, author={Desai, Poorav and Chakraborty, Tanmoy and Akhtar, Md Shad}, year={2022}, month={Jun.}, pages={10563-10571} }

@inproceedings{lundberg2017unified,
author = {Lundberg, Scott M. and Lee, Su-In},
title = {A unified approach to interpreting model predictions},
year = {2017},
isbn = {9781510860964},
publisher = {Curran Associates Inc.},
address = {Red Hook, NY, USA},
abstract = {Understanding why a model makes a certain prediction can be as crucial as the prediction's accuracy in many applications. However, the highest accuracy for large modern datasets is often achieved by complex models that even experts struggle to interpret, such as ensemble or deep learning models, creating a tension between accuracy and interpretability. In response, various methods have recently been proposed to help users interpret the predictions of complex models, but it is often unclear how these methods are related and when one method is preferable over another. To address this problem, we present a unified framework for interpreting predictions, SHAP (SHapley Additive exPlanations). SHAP assigns each feature an importance value for a particular prediction. Its novel components include: (1) the identification of a new class of additive feature importance measures, and (2) theoretical results showing there is a unique solution in this class with a set of desirable properties. The new class unifies six existing methods, notable because several recent methods in the class lack the proposed desirable properties. Based on insights from this unification, we present new methods that show improved computational performance and/or better consistency with human intuition than previous approaches.},
booktitle = {Proceedings of the 31st International Conference on Neural Information Processing Systems},
pages = {4768–4777},
numpages = {10},
location = {Long Beach, California, USA},
series = {NIPS'17}
}

@inproceedings{ribeiro2016whyitrustyou,
author = {Ribeiro, Marco Tulio and Singh, Sameer and Guestrin, Carlos},
title = {"Why Should I Trust You?": Explaining the Predictions of Any Classifier},
year = {2016},
isbn = {9781450342322},
publisher = {Association for Computing Machinery},
address = {New York, NY, USA},
url = {https://doi.org/10.1145/2939672.2939778},
doi = {10.1145/2939672.2939778},
abstract = {Despite widespread adoption, machine learning models remain mostly black boxes. Understanding the reasons behind predictions is, however, quite important in assessing trust, which is fundamental if one plans to take action based on a prediction, or when choosing whether to deploy a new model. Such understanding also provides insights into the model, which can be used to transform an untrustworthy model or prediction into a trustworthy one.In this work, we propose LIME, a novel explanation technique that explains the predictions of any classifier in an interpretable and faithful manner, by learning an interpretable model locally varound the prediction. We also propose a method to explain models by presenting representative individual predictions and their explanations in a non-redundant way, framing the task as a submodular optimization problem. We demonstrate the flexibility of these methods by explaining different models for text (e.g. random forests) and image classification (e.g. neural networks). We show the utility of explanations via novel experiments, both simulated and with human subjects, on various scenarios that require trust: deciding if one should trust a prediction, choosing between models, improving an untrustworthy classifier, and identifying why a classifier should not be trusted.},
booktitle = {Proceedings of the 22nd ACM SIGKDD International Conference on Knowledge Discovery and Data Mining},
pages = {1135–1144},
numpages = {10},
keywords = {interpretable machine learning, interpretability, explaining machine learning, black box classifier},
location = {San Francisco, California, USA},
series = {KDD '16}
}

@InProceedings{kayser2021evil,
    author    = {Kayser, Maxime and Camburu, Oana-Maria and Salewski, Leonard and Emde, Cornelius and Do, Virginie and Akata, Zeynep and Lukasiewicz, Thomas},
    title     = {E-ViL: A Dataset and Benchmark for Natural Language Explanations in Vision-Language Tasks},
    booktitle = {Proceedings of the IEEE/CVF International Conference on Computer Vision (ICCV)},
    month     = {October},
    year      = {2021},
    pages     = {1244-1254}
}

@InProceedings{wang2022ofa,
  title = 	 {{OFA}: Unifying Architectures, Tasks, and Modalities Through a Simple Sequence-to-Sequence Learning Framework},
  author =       {Wang, Peng and Yang, An and Men, Rui and Lin, Junyang and Bai, Shuai and Li, Zhikang and Ma, Jianxin and Zhou, Chang and Zhou, Jingren and Yang, Hongxia},
  booktitle = 	 {Proceedings of the 39th International Conference on Machine Learning},
  pages = 	 {23318--23340},
  year = 	 {2022},
  editor = 	 {Chaudhuri, Kamalika and Jegelka, Stefanie and Song, Le and Szepesvari, Csaba and Niu, Gang and Sabato, Sivan},
  volume = 	 {162},
  series = 	 {Proceedings of Machine Learning Research},
  month = 	 {17--23 Jul},
  publisher =    {PMLR},
  pdf = 	 {https://proceedings.mlr.press/v162/wang22al/wang22al.pdf},
  url = 	 {https://proceedings.mlr.press/v162/wang22al.html},
  abstract = 	 {In this work, we pursue a unified paradigm for multimodal pretraining to break the shackles of complex task/modality-specific customization. We propose OFA, a Task-Agnostic and Modality-Agnostic framework that supports Task Comprehensiveness. OFA unifies a diverse set of cross-modal and unimodal tasks, including image generation, visual grounding, image captioning, image classification, language modeling, etc., in a simple sequence-to-sequence learning framework. OFA follows the instruction-based learning in both pretraining and finetuning stages, requiring no extra task-specific layers for downstream tasks. In comparison with the recent state-of-the-art vision &amp; language models that rely on extremely large cross-modal datasets, OFA is pretrained on only 20M publicly available image-text pairs. Despite its simplicity and relatively small-scale training data, OFA achieves new SOTAs in a series of cross-modal tasks while attaining highly competitive performances on uni-modal tasks. Our further analysis indicates that OFA can also effectively transfer to unseen tasks and unseen domains. Our code and models are publicly available at https://github.com/OFA-Sys/OFA.}
}

@article{Cohen1960Coefficient,
	author = {Cohen, J.},
	journal = {Educational and Psychological Measurement},
	number = {1},
	year = {1960},
	month = {4},
	pages = {37--46},
	publisher = {SAGE Publications},
	title = {A {Coefficient} of {Agreement} for {Nominal} {Scales}},
	volume = {20},
}

@misc{liu2023improved,
      title={Improved Baselines with Visual Instruction Tuning}, 
      author={Haotian Liu and Chunyuan Li and Yuheng Li and Yong Jae Lee},
      year={2023},
      eprint={2310.03744},
      archivePrefix={arXiv},
      primaryClass={cs.CV}
}

@misc{petsiuk2018riserandomizedinputsampling,
      title={RISE: Randomized Input Sampling for Explanation of Black-box Models}, 
      author={Vitali Petsiuk and Abir Das and Kate Saenko},
      year={2018},
      eprint={1806.07421},
      archivePrefix={arXiv},
      primaryClass={cs.CV},
      url={https://arxiv.org/abs/1806.07421}, 
}

@misc{vanderlinden2019globalaggregationslocalexplanations,
      title={Global Aggregations of Local Explanations for Black Box models}, 
      author={Ilse van der Linden and Hinda Haned and Evangelos Kanoulas},
      year={2019},
      eprint={1907.03039},
      archivePrefix={arXiv},
      primaryClass={cs.IR},
      url={https://arxiv.org/abs/1907.03039}, 
}

@misc{smilkov2017smoothgradremovingnoiseadding,
      title={SmoothGrad: removing noise by adding noise}, 
      author={Daniel Smilkov and Nikhil Thorat and Been Kim and Fernanda Viégas and Martin Wattenberg},
      year={2017},
      eprint={1706.03825},
      archivePrefix={arXiv},
      primaryClass={cs.LG},
      url={https://arxiv.org/abs/1706.03825}, 
}

@inproceedings{sundararajan2017axiomaticattributiondeepnetworks,
author = {Sundararajan, Mukund and Taly, Ankur and Yan, Qiqi},
title = {Axiomatic attribution for deep networks},
year = {2017},
publisher = {JMLR.org},
abstract = {We study the problem of attributing the prediction of a deep network to its input features, a problem previously studied by several other works. We identify two fundamental axioms— Sensitivity and Implementation Invariance that attribution methods ought to satisfy. We show that they are not satisfied by most known attribution methods, which we consider to be a fundamental weakness of those methods. We use the axioms to guide the design of a new attribution method called Integrated Gradients. Our method requires no modification to the original network and is extremely simple to implement; it just needs a few calls to the standard gradient operator. We apply this method to a couple of image models, a couple of text models and a chemistry model, demonstrating its ability to debug networks, to extract rules from a network, and to enable users to engage with models better.},
booktitle = {Proceedings of the 34th International Conference on Machine Learning - Volume 70},
pages = {3319–3328},
numpages = {10},
location = {Sydney, NSW, Australia},
series = {ICML'17}
}

@inproceedings{
chang2019explainingimageclassifierscounterfactual,
title={Explaining Image Classifiers by Counterfactual Generation},
author={Chun-Hao Chang and Elliot Creager and Anna Goldenberg and David Duvenaud},
booktitle={International Conference on Learning Representations},
year={2019},
url={https://openreview.net/forum?id=B1MXz20cYQ},
}

@inproceedings{Mothilal_2020, series={FAT* ’20},
   title={Explaining machine learning classifiers through diverse counterfactual explanations},
   url={http://dx.doi.org/10.1145/3351095.3372850},
   DOI={10.1145/3351095.3372850},
   booktitle={Proceedings of the 2020 Conference on Fairness, Accountability, and Transparency},
   publisher={ACM},
   author={Mothilal, Ramaravind K. and Sharma, Amit and Tan, Chenhao},
   year={2020},
   month=jan, collection={FAT* ’20} }

@INPROCEEDINGS {zhou2015learningdeepfeaturesdiscriminative,
author = { Zhou, Bolei and Khosla, Aditya and Lapedriza, Agata and Oliva, Aude and Torralba, Antonio },
booktitle = { 2016 IEEE Conference on Computer Vision and Pattern Recognition (CVPR) },
title = {{ Learning Deep Features for Discriminative Localization }},
year = {2016},
volume = {},
ISSN = {1063-6919},
pages = {2921-2929},
abstract = { In this work, we revisit the global average pooling layer proposed in [13], and shed light on how it explicitly enables the convolutional neural network (CNN) to have remarkable localization ability despite being trained on imagelevel labels. While this technique was previously proposed as a means for regularizing training, we find that it actually builds a generic localizable deep representation that exposes the implicit attention of CNNs on an image. Despite the apparent simplicity of global average pooling, we are able to achieve 37.1% top-5 error for object localization on ILSVRC 2014 without training on any bounding box annotation. We demonstrate in a variety of experiments that our network is able to localize the discriminative image regions despite just being trained for solving classification task1. },
keywords = {Visualization;Neural networks;Training;Object recognition;Computer vision;Detectors;Spatial resolution},
doi = {10.1109/CVPR.2016.319},
url = {https://doi.ieeecomputersociety.org/10.1109/CVPR.2016.319},
publisher = {IEEE Computer Society},
address = {Los Alamitos, CA, USA},
month =Jun}

@ARTICLE{baltrušaitis2017multimodalmachinelearningsurvey,
  author={Baltrušaitis, Tadas and Ahuja, Chaitanya and Morency, Louis-Philippe},
  journal={IEEE Transactions on Pattern Analysis and Machine Intelligence}, 
  title={Multimodal Machine Learning: A Survey and Taxonomy}, 
  year={2019},
  volume={41},
  number={2},
  pages={423-443},
  keywords={Speech recognition;Visualization;Media;Speech;Multimedia communication;Streaming media;Hidden Markov models;Multimodal;machine learning;introductory;survey},
  doi={10.1109/TPAMI.2018.2798607}}

@inproceedings{
turpin2023language,
title={Language Models Don't Always Say What They Think: Unfaithful Explanations in Chain-of-Thought Prompting},
author={Miles Turpin and Julian Michael and Ethan Perez and Samuel R. Bowman},
booktitle={Thirty-seventh Conference on Neural Information Processing Systems},
year={2023},
url={https://openreview.net/forum?id=bzs4uPLXvi}
}

@inproceedings{ijcai2024p684,
  title     = {SEMANTIFY: Unveiling Memes with Robust Interpretability beyond Input Attribution},
  author    = {Bandyopadhyay, Dibyanayan and Ganguly, Asmit and Gain, Baban and Ekbal, Asif},
  booktitle = {Proceedings of the Thirty-Third International Joint Conference on
               Artificial Intelligence, {IJCAI-24}},
  publisher = {International Joint Conferences on Artificial Intelligence Organization},
  editor    = {Kate Larson},
  pages     = {6189--6197},
  year      = {2024},
  month     = {8},
  note      = {Main Track},
  doi       = {10.24963/ijcai.2024/684},
  url       = {https://doi.org/10.24963/ijcai.2024/684},
}

@inproceedings{atanasova-etal-2023-faithfulness,
    title = "Faithfulness Tests for Natural Language Explanations",
    author = "Atanasova, Pepa  and
      Camburu, Oana-Maria  and
      Lioma, Christina  and
      Lukasiewicz, Thomas  and
      Simonsen, Jakob Grue  and
      Augenstein, Isabelle",
    editor = "Rogers, Anna  and
      Boyd-Graber, Jordan  and
      Okazaki, Naoaki",
    booktitle = "Proceedings of the 61st Annual Meeting of the Association for Computational Linguistics (Volume 2: Short Papers)",
    month = jul,
    year = "2023",
    address = "Toronto, Canada",
    publisher = "Association for Computational Linguistics",
    url = "https://aclanthology.org/2023.acl-short.25/",
    doi = "10.18653/v1/2023.acl-short.25",
    pages = "283--294",
    abstract = "Explanations of neural models aim to reveal a model`s decision-making process for its predictions. However, recent work shows that current methods giving explanations such as saliency maps or counterfactuals can be misleading, as they are prone to present reasons that are unfaithful to the model`s inner workings. This work explores the challenging question of evaluating the faithfulness of natural language explanations (NLEs). To this end, we present two tests. First, we propose a counterfactual input editor for inserting reasons that lead to counterfactual predictions but are not reflected by the NLEs. Second, we reconstruct inputs from the reasons stated in the generated NLEs and check how often they lead to the same predictions. Our tests can evaluate emerging NLE models, proving a fundamental tool in the development of faithful NLEs."
}

@inproceedings{siegel2024probabilitiesmatterfaithfulmetric,
    title = "The Probabilities Also Matter: A More Faithful Metric for Faithfulness of Free-Text Explanations in Large Language Models",
    author = "Siegel, Noah  and
      Camburu, Oana-Maria  and
      Heess, Nicolas  and
      Perez-Ortiz, Maria",
    editor = "Ku, Lun-Wei  and
      Martins, Andre  and
      Srikumar, Vivek",
    booktitle = "Proceedings of the 62nd Annual Meeting of the Association for Computational Linguistics (Volume 2: Short Papers)",
    month = aug,
    year = "2024",
    address = "Bangkok, Thailand",
    publisher = "Association for Computational Linguistics",
    url = "https://aclanthology.org/2024.acl-short.49/",
    doi = "10.18653/v1/2024.acl-short.49",
    pages = "530--546",
    abstract = "In order to oversee advanced AI systems, it is important to understand their reasons for generating a given output. When prompted, large language models (LLMs) can provide natural language explanations or reasoning traces that sound plausible and receive high ratings from human annotators. However, it is unclear to what extent these explanations are truly capturing the factors responsible for the model{'}s predictions: the most ``human-like'' explanation may be different from the one that is most faithful to the model{'}s true decision making process. In this work, we introduce the correlational counterfactual test (CCT), a faithfulness metric based on counterfactual input edits that takes into account not just the binary label change, but the total shift in the model{'}s predicted label distribution. We evaluate the faithfulness of free-text explanations generated by few-shot-prompted LLMs from the Llama-2 family on three NLP tasks. We find that these explanations are indeed more likely to mention factors when they are impactful to the model{'}s prediction, with the degree of association increasing with model size but varying significantly by task."
}

@inproceedings{bandyopadhyay-etal-2024-seeing,
    title = "Seeing Through {V}isual{BERT}: A Causal Adventure on Memetic Landscapes",
    author = "Bandyopadhyay, Dibyanayan  and
      Hasanuzzaman, Mohammed  and
      Ekbal, Asif",
    editor = "Al-Onaizan, Yaser  and
      Bansal, Mohit  and
      Chen, Yun-Nung",
    booktitle = "Findings of the Association for Computational Linguistics: EMNLP 2024",
    month = nov,
    year = "2024",
    address = "Miami, Florida, USA",
    publisher = "Association for Computational Linguistics",
    url = "https://aclanthology.org/2024.findings-emnlp.629/",
    doi = "10.18653/v1/2024.findings-emnlp.629",
    pages = "10715--10731",
    abstract = "Detecting offensive memes is crucial, yet standard deep neural network systems often remain opaque. Various input attribution-based methods attempt to interpret their behavior, but they face challenges with implicitly offensive memes and non-causal attributions. To address these issues, we propose a framework based on a Structural Causal Model (SCM). In this framework, VisualBERT is trained to predict the class of an input meme based on both meme input and causal concepts, allowing for transparent interpretation. Our qualitative evaluation demonstrates the framework`s effectiveness in understanding model behavior, particularly in determining whether the model was right due to the right reason, and in identifying reasons behind misclassification. Additionally, quantitative analysis assesses the significance of proposed modelling choices, such as de-confounding, adversarial learning, and dynamic routing, and compares them with input attribution methods. Surprisingly, we find that input attribution methods do not guarantee causality within our framework, raising questions about their reliability in safety-critical applications. The project page is at: https://newcodevelop.github.io/causality{\_}adventure/"
}

@inproceedings{chen2024mllmasajudgeassessingmultimodalllmasajudge,
  author={Dongping Chen and Ruoxi Chen and Shilin Zhang and Yaochen Wang and Yinuo Liu and Huichi Zhou and Qihui Zhang and Yao Wan and Pan Zhou and Lichao Sun},
  title={MLLM-as-a-Judge: Assessing Multimodal LLM-as-a-Judge with Vision-Language Benchmark},
  year={2024},
  cdate={1704067200000},
  url={https://openreview.net/forum?id=dbFEFHAD79},
  booktitle={ICML}
}

@article{radford2019language,
  title={Language Models are Unsupervised Multitask Learners},
  author={Radford, Alec and Wu, Jeff and Child, Rewon and Luan, David and Amodei, Dario and Sutskever, Ilya},
  year={2019}
}

@InProceedings{Park_2018_CVPR,
author = {Park, Dong Huk and Hendricks, Lisa Anne and Akata, Zeynep and Rohrbach, Anna and Schiele, Bernt and Darrell, Trevor and Rohrbach, Marcus},
title = {Multimodal Explanations: Justifying Decisions and Pointing to the Evidence},
booktitle = {Proceedings of the IEEE Conference on Computer Vision and Pattern Recognition (CVPR)},
month = {June},
year = {2018}
}

@misc{megahed2025adapting,
    title={Adapting OpenAI's CLIP Model for Few-Shot Image Inspection in Manufacturing Quality Control: An Expository Case Study with Multiple Application Examples},
    author={Fadel M. Megahed and Ying-Ju Chen and Bianca Maria Colosimo and Marco Luigi Giuseppe Grasso and L. Allison Jones-Farmer and Sven Knoth and Hongyue Sun and Inez Zwetsloot},
    year={2025},
    eprint={2501.12596},
    archivePrefix={arXiv},
    primaryClass={cs.CV}
}

@article{JI2025102794,
title = {Pixel-level semantic parsing in complex industrial scenarios using large vision-language models},
journal = {Information Fusion},
volume = {116},
pages = {102794},
year = {2025},
issn = {1566-2535},
doi = {https://doi.org/10.1016/j.inffus.2024.102794},
url = {https://www.sciencedirect.com/science/article/pii/S1566253524005724},
author = {Xiaofeng Ji and Faming Gong and Nuanlai Wang and Yanpu Zhao and Yuhui Ma and Zhuang Shi},
keywords = {Visual-language model, CAM-CLIP, Semantic segmentation in industrial, Pixel-level semantic parsing, Complex industrial scenarios},
abstract = {The emergence of vision-language models, particularly Contrastive Language-Image Pre-Training (CLIP), has significantly improved the performance of numerous visual tasks, demonstrating notable zero-shot transfer abilities. CLIP’s remarkable generalization ability offers substantial innovation potential for smart manufacturing and public safety surveillance, potentially accelerating the advancement of Industry 5.0. However, most current research focuses on public datasets, with limited investigation into complex industrial scenarios. These industrial scenarios’ semantic structures and image qualities differ significantly from the datasets used to train CLIP, presenting challenges for its effectiveness in industrial applications. This paper presents a Context-Aware Masked CLIP (CAM-CLIP) framework for high-performance pixel-level semantic parsing in complex industrial scenarios, under few-shot conditions. The framework autonomously detects and identifies objects in industrial scenarios based on textual descriptions, enhancing safety monitoring and anomaly detection. We constructed a dedicated dataset using offshore drilling platforms as a case study and conducted empirical validation. Results demonstrate that CAM-CLIP achieved an 80.7 mIoU in pixel-level semantic parsing of offshore drilling platforms with a limited sample size, outperforming state-of-the-art methods by 8.47 mIoU. This study extends CLIP’s applicability to industrial settings and offers a model for future implementations. It advances semantic parsing in industrial scenarios and promotes the development of intelligent, interpretable systems.}
}

@inproceedings{singh2022flava,
  author = {
    Amanpreet Singh and
    Ronghang Hu and
    Vedanuj Goswami and
    Guillaume Couairon and
    Wojciech Galuba and
    Marcus Rohrbach and
    Douwe Kiela
  },
  title = {
    {FLAVA:} {A} Foundational Language And 
    Vision Alignment Model
  },
  booktitle={CVPR},
  year={2022}
}

@misc{bai2023qwenvlversatilevisionlanguagemodel,
      title={Qwen-VL: A Versatile Vision-Language Model for Understanding, Localization, Text Reading, and Beyond}, 
      author={Jinze Bai and Shuai Bai and Shusheng Yang and Shijie Wang and Sinan Tan and Peng Wang and Junyang Lin and Chang Zhou and Jingren Zhou},
      year={2023},
      eprint={2308.12966},
      archivePrefix={arXiv},
      primaryClass={cs.CV},
      url={https://arxiv.org/abs/2308.12966}, 
}
\bibliographystyle{acl_natbib}

\newpage
\appendix
\cleardoublepage

For reference throughout the Appendix, the set of notations are referenced in Table \ref{tab:notations}.

\begin{table*}[!h]
\centering
\resizebox{0.85\textwidth}{!}{
\begin{tabular}{@{}lllll@{}}
\toprule
\textbf{Notations}                                                                                      & \textbf{Meaning}                                                         &  &  &  \\ \midrule
$E$                                                                                                     & Encoder of the multimodal classifier (VB, CLIP+MFB)                      &  &  &  \\
$\mathcal{C}_1$                                                                                                   & The feed forward layer of the multimodal classifier                      &  &  &  \\
$M$                                                                                                     & The multimodal classifier, CAuSE seeks to explain ($C_1 \circ E$)              &  &  &  \\
$\psi$                                                                                                  & The linear layer or MLP that projects the encoder representations into the LM   &  &  &  \\
$\phi$                                                                                                  & The LM (GPT-2) finetuned in our framework.                               &  &  &  \\
$\mathcal{A}$                                                                                                     & The aggregator that takes the LM output and aggregates it into vectors.  &  &  &  \\
$\mathcal{C}_2$                                                                                                   & The feed forward layer of CAuSE, which is identical to $\mathcal{C}_1$             &  &  &  \\
\begin{tabular}[c]{@{}l@{}}$F$ (LLM machinery)\end{tabular} & This is the combination of the MLP, LM and the Aggregator ($A \circ \phi \circ \psi$) &  &  &  \\
Explainer                                                                                               & All the components taken together ($C_2 \circ F$)              &  &  &  \\
CAuSE                                                                                                   & The Explainer along with its losses and the training paradigm.           &  &  &  \\ \bottomrule
\end{tabular}
}
\caption{The revised notations that are used throughout the paper}\label{tab:notations}
\end{table*}

\section{Proofs}\label{proofs}

\setcounter{theorem}{0}
\newtheorem{lem}[theorem]{Lemma}
\begin{lem}\label{theo3} (Simulation)
    Considering the weights of $\mathcal{C}_2$ and $\mathcal{C}_1$ remain the same throughout the training process, using IIT between them ensures $F(c) = (\mathcal{A} \circ \phi \circ \psi)(c) = E(t,v)$, where $c = E(t,v)$. Informally, the LLM machinery ($F = (\mathcal{A} \circ \phi \circ \psi)$) simulates the behavior of $E$.\footnote{$A\circ B$ denotes the composition operation between $A$ and $B$.}
\end{lem}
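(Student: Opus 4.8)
The plan is to prove the identity $F(c)=c$ in the idealized regime where the regularizer and the IIT objective attain their global minimum, and then comment on the approximate version. First I would use the hypothesis that the weights coincide, $\mathcal{W}_{\mathcal{C}_1}=\mathcal{W}_{\mathcal{C}_2}$ (equivalently $\mathcal{R}_{\text{match}}=0$), to collapse the two classifiers into a single function $g:=\mathcal{C}_1=\mathcal{C}_2$. With this reduction, the only difference between the two forward passes that $\mathcal{L}_{IIT}$ compares is the input: the teacher pipeline feeds $g$ with $c=E(t,v)$, while the student pipeline feeds $g$ with $F(c)$. Hence the whole lemma becomes the statement that matching all interchange interventions through one and the same network forces its two inputs to coincide on the image of $E$.

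Next I would rewrite $\mathcal{L}_{IIT}=0$ concretely. Writing $a_i(x)$ for the activation of neuron $i$ under input $x$ and $G_i(s,v)$ for the output of $g$ run on source $s$ with neuron $i$ clamped to the scalar $v$, the vanishing of the KL divergence in Equation \ref{iitloss} means $G_i\bigl(c_s,a_i(c_b)\bigr)=G_i\bigl(F(c_s),a_i(F(c_b))\bigr)$ for every sampled neuron $i$ and every base/source pair $(c_b,c_s)$; taking $c_b=c_s=c$ recovers the teacher--student equality $g(c)=g(F(c))$. I would start the ascent at the output layer: clamping a logit and using that the softmax is injective up to an additive constant, together with the teacher--student equality, pins the logits of $g(c)$ and $g(F(c))$ to agree up to one global constant shift. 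I would then run a backward induction over layers, feeding in the interventions with \emph{varying} sources to successively force the activation vectors of each hidden layer to agree, i.e. $a_i(F(c))=a_i(c)$ for all $i$; the broad neural coverage argument (Appendix \S\ref{bound}) supplies the fact that, across training steps, every neuron is eventually clamped, so the full family of constraints is available.

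The final step lifts activation equality back to the input. Once the first hidden layer satisfies $a^{(1)}(F(c))=a^{(1)}(c)$, injectivity of the first affine map on the image of $E$ (trivial kernel of $W_1$ restricted to $\{E(t,v)\}$) yields $F(c)=c$, which is exactly the claim, and the residual output agreement is already guaranteed by $\mathcal{L}_{TS}$.

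I expect the main obstacle to be the backward induction of the second paragraph. The classifier head is strongly non-injective --- it compresses a high-dimensional hidden state into an $(L-1)$-dimensional simplex --- so neither a single-neuron clamp nor a whole-layer clamp alone determines the underlying activations; the argument must exploit the full family of interventions with different sources to separate distinct activation vectors. I would therefore state the conclusion as exact only in the zero-loss limit and under an injectivity (identifiability) assumption on $g$, and treat the trained network as realizing this approximately, so that $F(c)\approx c$ with the gap controlled by $\mathcal{L}_{IIT}+\mathcal{R}_{\text{match}}+\mathcal{L}_{TS}$.
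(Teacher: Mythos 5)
Your overall skeleton matches the paper's: both arguments work in the idealized zero-loss regime, both use $\mathcal{W}_{\mathcal{C}_1}=\mathcal{W}_{\mathcal{C}_2}$ to collapse the two heads into a single function, and both exploit the base-equals-source case that arises because intervention pairs are sampled from all of $D_E\times D_E$. The genuine gap is your middle step. You clamp an output logit and then invoke a ``backward induction over layers'' to push agreement from the logits down to the first hidden layer, finishing with injectivity of the first affine map. But, as you yourself flag, the head is many-to-one, so neither $g(c)=g(F(c))$ nor any single clamp determines the pre-images; that induction is precisely the content of the lemma and is never actually carried out. Patching it with an identifiability assumption on $g$ changes the statement: the lemma's hypotheses are only weight equality and the IIT objective, not injectivity of the classifier.

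The paper closes this step by a different mechanism: it intervenes on the \emph{input} neurons of $\mathcal{C}_1$ and $\mathcal{C}_2$ (i.e., directly on the components of $E(b)$ and $F(E(b))$) in a two-neuron, two-layer softmax toy model. Writing $E(b)=[p,q]$ and $F(E(b))=[\rho_1 p,\rho_2 q]$ and imposing output agreement for the interventions associated with each of the two output weight rows yields Equations \ref{alignp} and \ref{alignq}, two constraints sharing the right-hand side $\mathcal{Q}-\mathcal{Q}^*$; since pairwise equality of the weight rows is not enforced and $E(b)$ is not the zero vector, the only admissible solution is $\rho_1=\rho_2=1$, i.e.\ $F(E(b))=E(b)$, with no injectivity hypothesis on the head. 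The price is that the paper's proof is itself explicitly confined to this idealized two-neuron linear setting, so your instinct to assert the result only in a restricted, zero-loss regime is shared; but the deduction you leave open is exactly the one the paper's explicit computation supplies. To repair your version, either prove the backward induction from the full family of varying-source interventions, or restrict, as the paper does, to interventions at the classifier's input layer, where the clamped quantities are the very vectors whose equality you need.
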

\begin{proof}

This proof establishes the principle in a \textit{simplified and idealized setting} with a two-neuron linear representation, demonstrating that the IIT training objective drives F to behave as an identity function. Also, note $b = (t,v)$

We first assume that the weights of $\mathcal{C}_1$ and $\mathcal{C}_2$ are $w$ and $w'$, respectively, and they remain fixed throughout training. Hence, $w = w'$. For simplicity, we assume both $\mathcal{C}_1$ and $\mathcal{C}_2$ are simple two-layer fully connected feedforward neural networks.

Under intervention of two identical input neurons in $\mathcal{C}_1$ and $\mathcal{C}_2$, and by the definition of IIT training, we have:

\begin{multline}\label{eqn-theo12}
    \text{softmax}(w_{1i} E(s)_i, w_{2i} E(b)_i) = \\
    \text{softmax}(w_{1i}' F(E(s)_i), w_{2i}' F(E(b)_i)), \\
    \forall i \in \{1, 2\}, \text{ and since } w = w', \\
    \frac{\exp\left(\sum_i w_{1i} E(b)_i\right)}{\exp\left(\sum_i w_{1i} E(b)_i\right) + \exp\left(\sum_i w_{2i} E(s)_i\right)} = \\
    \frac{\exp\left(\sum_i w_{1i} F(E(b))_i\right)}{\exp\left(\sum_i w_{1i} F(E(b))_i\right) + \exp\left(\sum_i w_{2i} F(E(s))_i\right)}, \\
    \forall i \in \{1, 2\}.
\end{multline}

Here, $b$ and $s$ are the base and source inputs, respectively, for $E$. Correspondingly, $E(b)$ and $E(s)$ are the base and source inputs for $F$.

Assume, for contradiction, that $E(b) \neq F(E(b))$. Then there exists some $j$ such that $E(b)_j \neq F(E(b))_j$. Let:

\begin{align}
    E[b] &= [p, q] \nonumber \\
    F[E[b]] &= [\rho_1 p, \rho_2 q]
\end{align}

Let $(s_i, b_j)$ pairs be sampled from the Cartesian product $D_E \times D_E$. We do not explicitly enforce the equality constraint $s = b$ during IIT training. However, since sampling spans all of $D_E \times D_E$, there will naturally be cases where $s = b$.

In such cases, by definition, $E(s) = E(b)$.

So, considering $E(s) = E(b)$, and assuming:

\begin{align}
    w_{11} = \beta, \quad w_{12} = \gamma, \quad w_{21} = \delta, \quad w_{22} = \epsilon \nonumber
\end{align}

we can write Equation~\ref{eqn-theo12} as:

\begin{align}\label{alignp}
    &\beta p(1 - \rho_1) + \gamma q(1 - \rho_2) \nonumber \\
    &= \underbrace{
        \log\left(
            \exp(\beta p + \gamma q) + \exp(\delta p + \epsilon q)
        \right)
    }_{\mathcal{Q}} \nonumber \\
    &\quad - 
    \underbrace{
        \log\left(
            \exp(\beta \rho_1 p + \gamma \rho_2 q) 
            + \exp(\delta \rho_1 p + \epsilon \rho_2 q)
        \right)
    }_{\mathcal{Q}^*}
\end{align}

Similarly, considering intervention on the second set of neurons:

\begin{align}\label{alignq}
    \delta p(1 - \rho_1) + \epsilon q(1 - \rho_2) = \mathcal{Q} - \mathcal{Q}^*
\end{align}

Equations~\ref{alignp} and~\ref{alignq} both hold true if any one of the following conditions is satisfied:
\begin{itemize}
    \item [i)] $\beta = \delta$ and $\gamma = \epsilon$
    \item [ii)] $E(b)$ is a zero vector
    \item [iii)] $\rho_1 = \rho_2 = 1$
\end{itemize}

The first condition is impossible, because even though weights of $\mathcal{C}_1$ and $\mathcal{C}_2$ are the same, the weights of $\mathcal{C}_1$ are fixed, and pairwise equality of weights for $\mathcal{C}_2$ is not enforced during training. The second condition is false because $E(b)$ is fixed for any $b$ and is not a zero vector. Therefore, we must have $\rho_1 = \rho_2 = 1$, which implies:

\[
E(b) = F(E(b))
\]
 \end{proof}

\setcounter{theorem}{0}
\begin{theorem}\label{theo5} (Causal Abstraction)
    If $\mathcal{C}_1$ and $\mathcal{C}_2$ become identical (their weights are equal and they are a causal abstraction
of each other), the explainer becomes a causal abstraction of $M$ and \emph{vice versa}.
\end{theorem}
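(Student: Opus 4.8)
The plan is to leverage Lemma~\ref{theo3} as the linchpin and then verify causal equivalence intervention-class by intervention-class. First I would recall that the classifier $M$ decomposes as $M = \mathcal{C}_1 \circ E$, while the Explainer decomposes as $\mathcal{C}_2 \circ F$, where $F = \mathcal{A} \circ \phi \circ \psi$. Lemma~\ref{theo3} gives the crucial structural fact that, under the IIT objective with $\mathcal{W}_{\mathcal{C}_1} = \mathcal{W}_{\mathcal{C}_2}$, the LLM machinery satisfies $F(E(z)) = E(z)$; that is, $F$ acts as the identity on the image of $E$. This immediately collapses the Explainer on relevant inputs: $\mathcal{C}_2(F(E(z))) = \mathcal{C}_2(E(z))$, and since $\mathcal{C}_2$ shares weights with $\mathcal{C}_1$, we get $\mathcal{C}_2(E(z)) = \mathcal{C}_1(E(z))$. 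So output-level agreement on clean inputs is the easy base case.

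Next I would set up the causal-abstraction criterion from Section~\ref{ii} explicitly: I must exhibit a map $S$ from neurons/variables of the Explainer to those of $M$ such that interchange interventions commute, i.e. $y^{INT}_{\text{Explainer}} = y^{INT}_{M}$ under aligned interventions. Following the proof sketch, I would partition the interventions into three layers and handle each. For \textbf{input-level interventions}, I would define the alignment $\alpha(z) = E(z)$ between the input stages of the two pipelines and use $F \circ E = E$ (Lemma~\ref{theo3}) together with $\mathcal{C}_1 = \mathcal{C}_2$ to conclude that overwriting $E(b)$ into the source pass yields identical outputs on both sides. For \textbf{intermediate-representation interventions} at the $E(z)$ / $F(E(z))$ interface, the identity action of $F$ means the representation being intervened upon is literally the same object on both pipelines, so any substitution produces equal downstream values trivially. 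For \textbf{neuron-level interventions inside the heads}, this is exactly the hypothesis: $\mathcal{C}_1$ and $\mathcal{C}_2$ are a causal abstraction of each other by the Definition (Causally Identical Networks), so $y^{INT}_{\mathcal{C}_1} = y^{INT}_{\mathcal{C}_2}$ holds by assumption, and composing with the (intervention-invariant) identity behavior of $F$ preserves this.

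For the \emph{vice versa} direction I would argue by contraposition on the decomposition: if the Explainer is a causal abstraction of $M$, then in particular neuron-level interventions confined to the heads must commute, and since the only head components are $\mathcal{C}_2$ versus $\mathcal{C}_1$ and the rest of each pipeline ($E$ on one side, $F$ on the other, which agree by Lemma~\ref{theo3}) is held fixed under such interventions, the commutation restricts to a statement that $\mathcal{C}_1$ and $\mathcal{C}_2$ are a causal abstraction of each other; combined with the weight-equality enforced by $\mathcal{R}_{\text{match}}$, this recovers that they are identical in the sense of the Definition.

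The main obstacle I anticipate is making the three intervention classes genuinely \emph{exhaustive and well-defined} rather than merely illustrative. In particular, the subtle point is that Lemma~\ref{theo3} only guarantees $F$ is the identity \emph{on the range of $E$}, whereas an arbitrary interchange intervention could in principle drive an intermediate activation off that manifold; I would need to argue that the alignment map $\alpha$ and the neuron-correspondence $S$ are chosen so that every admissible aligned intervention keeps the relevant representation within the domain where $F$ behaves as the identity (or, equivalently, that interventions are only defined on the matched neurons of $\mathcal{C}_1/\mathcal{C}_2$ and on the shared $E(z)$ interface, not on the internal states of $\phi$). Pinning down this domain restriction rigorously — and thereby justifying that no ``unaligned'' intervention escapes the three cases — is where the real care is required; the rest reduces to the substitutions already licensed by Lemma~\ref{theo3} and the identical-networks hypothesis.
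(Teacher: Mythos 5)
Your proposal matches the paper's proof essentially step for step: it invokes Lemma~\ref{theo3} to collapse $F$ to the identity on $\mathrm{Im}(E)$, chooses $\alpha = E$ as the alignment map, and verifies commutation over the same three intervention classes (input-level, intermediate-representation, and neuron-level inside the heads), with the last case discharged by the identical-networks hypothesis. The domain-restriction subtlety you flag is precisely how the paper handles it — interventions on $z$ are confined to the image of $\alpha$ (equivalently $z' \in \mathrm{Im}(E)$, with a partial inverse $\beta$ relying on injectivity of $E$) — so your anticipated obstacle is resolved exactly as the paper does.
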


\begin{proof}

The following proof assumes the idealized identity result of Lemma 1 and the injectivity of the encoder E over the domain of interest. $E$ and $F$ are black boxes and do not share internals. We can only inspect their inputs, outputs, and intermediate representations, and we are allowed to intervene on them. We can also intervene on neurons of $\mathcal{C}_1$ and $\mathcal{C}_2$ as they share the same architecture. To verify whether the explainer is a causal abstraction of $M$, we consider three cases:

We assume there exists a correspondence between inputs of $E$, called $x$ and inputs of $F$, called $z$. For intervention on $x$, $do(x=x')$, we intervene $z$ to $z'$ as $do(z=z')$, but we cannot arbitrarily choose $z'$, as a plausible causal abstraction between $M$ and explainer with respect to the input would require for each high-level intervention 
$do(x=x')$, there is a corresponding low-level intervention $do(z=\alpha(x'))$ whose result agrees with the high-level model. Following the standard definition of causal abstraction, we only align interventions on 
$z$ of the form  $z=\alpha(x)$. Arbitrary 
z-values lie outside the image of 
$\alpha$ and thus have no corresponding high‐level intervention, so they are excluded from the abstraction guarantee. $E$ is a good choice for $\alpha$, because it is already a mapping between $x$ to $z$ and if we choose $\alpha(x) = E(x)$, then $y_2=\mathcal{C}_2(F(\alpha(x))) = \mathcal{C}_2(F(E(x))) = \mathcal{C}_2(E(x)) = \mathcal{C}_1(E(x))=y_1$.

If we intervene $x$ to $x'$, we will also intervene $z$ to $\alpha(x')$. Also,
$y_1^{do(x=x')} := \mathcal{C}_1(E(x'))$
Under this assumption:
$y_2^{do(z=z')} := \mathcal{C}_2(F(z')) = \mathcal{C}_2(F((\underline{do(z'=\alpha(x'))}))) = \mathcal{C}_2(E(x')) = \mathcal{C}_1(E(x'))=y_1^{do(x=x')}$

We restrict our analysis to $z'$ in the image of $E$, i.e. $z' = E(x')$ for some $x'$. Then there exists a partial inverse $\beta:Im(E)\rightarrow X$, $\beta(E(x')) = x'$, or $\beta(z') = x'$ for all $z' \in Im(E)$.

So if we intervene on $z$ to make it $z'$, we assume that intervened $z'$ is also an image of some intervened $x'$, and $x' = \beta(z')$.
Denote, $y_2^{do(z=z')} := \mathcal{C}_2(F(z'))$

Under these assumptions, $y_1^{do(x=x')} := \mathcal{C}_1(E(x')) = \mathcal{C}_1(E(\underline{do(x'=\beta(z')})) =  \mathcal{C}_1(E(\beta(z'))) = \mathcal{C}_1(F(E(\beta(z'))))
\mathcal{C}_1(F(z')) =
\mathcal{C}_2(F(z')) = \mathcal{C}_2(F(z')) = y_2^{do(z=z')}$.

This is easier to do for intermediate representations. Let us denote the intermediate representation of $F$ as $k_F=F(z)$, where $z = E(x)$. We transform $F(E(x))$ to $F(E(x))'$ as $do(k_F = k_F')$. Before this transformation, as $F(E(x)) = E(x)$, we know the intermediate representation for $E$ (i.e. $k_E$) would be equal to $k_F = F(E(x))$ too. So, $k_E = k_F$. 
As we have $k_E = k_F$, hence all interventions on $k_E$ and $k_F$ are interchangeable.

So, $k_E' := k_F'$, if $do(k_F=k_F')$, and $k_F' := k_E'$, if $do(k_E=k_E')$.
So, under $do(k_F = k_F')$,
$y_2^{do(k_F = k_F')} := \mathcal{C}_2(k_F') = \mathcal{C}_1(k_F') = \mathcal{C}_1(k_E') = y_1^{do(k_E = k_E')}$. Similarly,
$y_1^{do(k_E = k_E')} := \mathcal{C}_1(k_E') = \mathcal{C}_2(k_E') = \mathcal{C}_2(k_F') = y_2^{do(k_F = k_F')}$.

If a neuronal intervention is performed inside $\mathcal{C}_1$ or $\mathcal{C}_2$, the outputs will still match, because $\mathcal{C}_1$ and $\mathcal{C}_2$ are trained to be causally equivalent via the IIT loss and are structurally identical, ensured by $R_{\text{match}}$. Note that any input intervention at the level of $\mathcal{C}_1$ and $\mathcal{C}_2$ falls under the previous case for intervention on intermediate representation, and is already covered.

\end{proof}

\section{VLM self-explanation inconsistency vs CAuSE}
\label{app:VLMvCAuSE}

VLM can be a candidate for generating the explanation of an already trained classifier but as empirically shown in Section \ref{baselines}, VLMs suffer from lower F1 score even after finetuning, due to its non access to the representation from $M$. 
Through an example below (Figure \ref{fig:mot}), we also try to explain the VLM failure example under a counterfactual setting while our method CAuSE succeeds.

This figure demonstrates that our model provides more faithful post-hoc explanations under counterfactual scenarios. Unlike LLMs, which often fail to justify their original explanations when the input is minimally changed, our model updates its explanation to reflect the classifier’s altered decision

Part \ding{172} illustrates that the LLM is unfaithful to its own explanation. It claims that modifying a specific aspect of the image would change its classification from offensive to non-offensive. However, even after altering the input as suggested, the model's label remains unchanged. This demonstrates that standard VLMs often fail to remain consistent with their own explanations.
In contrast, our proposed framework CAuSE exhibits robust behavior in the same scenario. When presented with a counterfactual instance \ding{174}—i.e., a minimally edited input that causes the $M$’s output to change (see M’s output in \ding{173} and \ding{174})—finetuned $\phi$ (trained with $\mathcal{L}_{CAuSE}$) updates its explanation accordingly.

\section{Bound for Full Neuron Coverage}
\label{bound}

We aim to find the number of training repetitions, $N$, required to sample and intervene on all $n$ neurons in the network with a high probability, $1-\delta$. During each repetition, we uniformly sample a fraction $p_s = 0.2$ of the neurons at random.

\subsection*{Derivation of the Bound}
The probability that a specific neuron is \emph{not} sampled in a single round is $(1-p_s)$. Since the sampling rounds are independent, the probability that a particular neuron $i$ is not sampled in any of the $N$ repetitions is $P(E_i) = (1-p_s)^N$.

To ensure all neurons are covered, we must bound the probability that at least one neuron is missed. Using the union bound (Boole's inequality), we can express this probability as:
\begin{equation}
    P\left(\bigcup_{i=1}^{n} E_i\right) \le \sum_{i=1}^{n} P(E_i) = n (1 - p_s)^N
\end{equation}
\begin{figure*}[htbp]
    \centering
    \includegraphics[width=\textwidth]{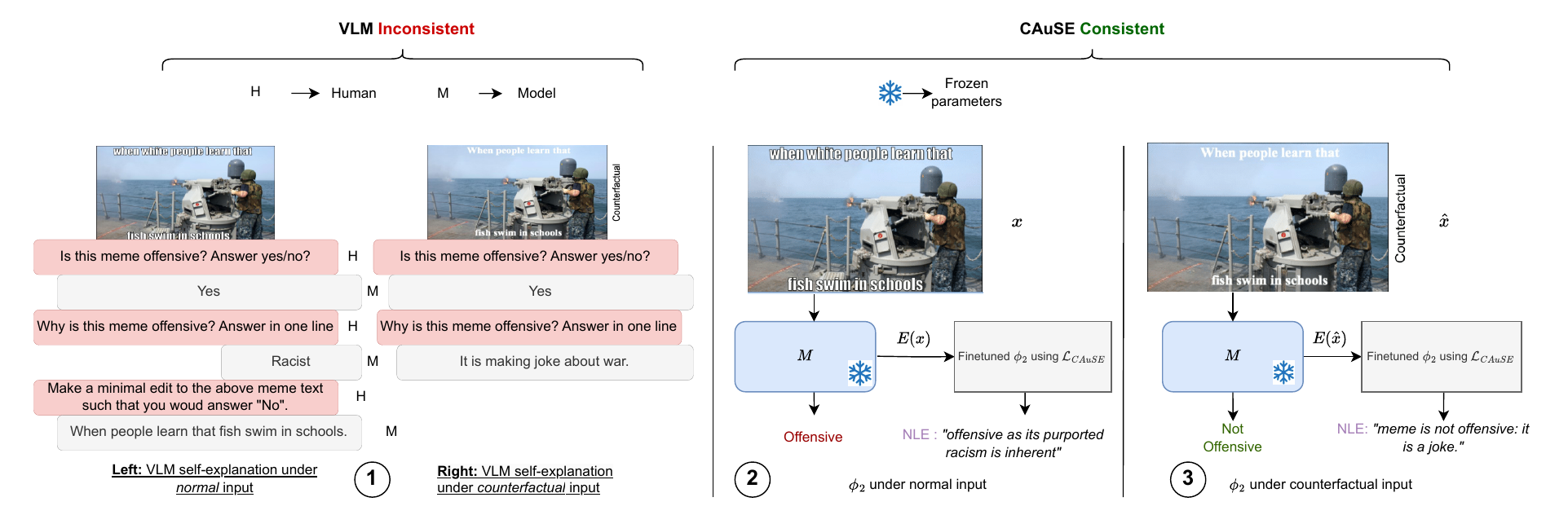}
    \caption{\ding{172} \textit{Left:} Given a meme, a VLM (LLaVA-1.6 \cite{liu2023visualinstructiontuning}) predicts a label and provides a corresponding explanation. \ding{172} \textit{Right:} Under a counterfactual input, the explanation remains unchanged despite the altered input, indicating model unreliability. \ding{173} and \ding{174} illustrate how the NLE adapts to the counterfactual input along with the prediction. The NLE generated by CAuSE is therefore faithful to the multimodal classifier's prediction, as defined by \citet{atanasova-etal-2023-faithfulness}. The example input and its counterfactual along with the VLM input-output pairs in \ding{172} are adopted from \citet{bandyopadhyay-etal-2024-seeing}}

    \label{fig:mot}
\end{figure*}

We require this failure probability to be no greater than a small threshold $\delta$, leading to the inequality $n(1 - p_s)^N \le \delta$. Solving for $N$ by taking the natural logarithm of both sides yields:
\begin{align}
    (1 - p_s)^N &\le \frac{\delta}{n} \\
    N \ln(1 - p_s) &\le \ln(\delta) - \ln(n) \\
    N &\ge \frac{\ln(n) - \ln(\delta)}{-\ln(1 - p_s)} \label{eq:N_bound}
\end{align}
Since $\ln(1 - p_s)$ is negative for $0 < p_s < 1$, the inequality is reversed during the final step of the derivation.

\subsection*{Application to Our Network}
For the classifiers $\mathcal{C}_1$ and $\mathcal{C}_2$, the total number of neurons is $n = 557,440$. Setting a desired success probability of $99.999\%$ (i.e., $\delta = 10^{-5}$), and using $p_s=0.2$, we can apply Equation~\ref{eq:N_bound} to calculate the required number of repetitions:
\begin{equation*}
    N \ge \frac{\ln(557440) - \ln(10^{-5})}{-\ln(0.8)} \approx 110.9
\end{equation*}
Thus, we need at least $N=111$ repetitions. With a batch size of 16, this corresponds to processing $111 \times 16 = 1776$ training samples. As the training sets for both of our datasets contain more than this number of samples, we can be confident that all neurons are sampled and intervened upon within a single training epoch.
\begin{figure*}[t]
    \centering
    \includegraphics[height=3.5cm,keepaspectratio]{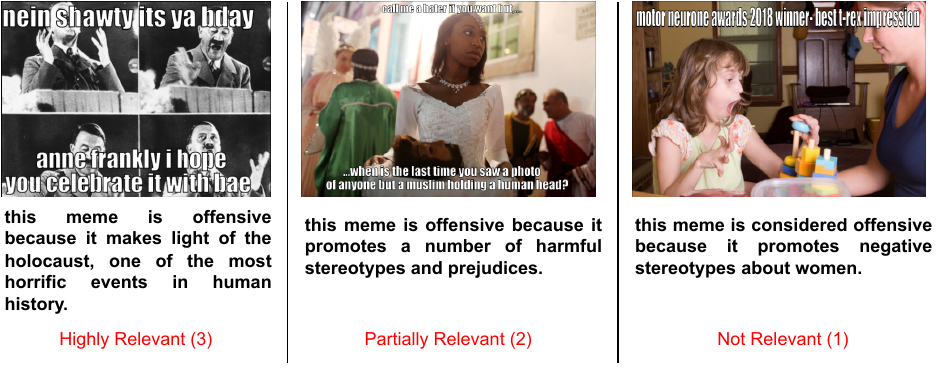}
    \caption{Example of relevance score used for evaluation of generated explanation from LLM.}
    \label{fig:relevance_scoring}
\end{figure*}

\begin{figure*}[htbp]
    \centering
\includegraphics[width=0.75\textwidth]{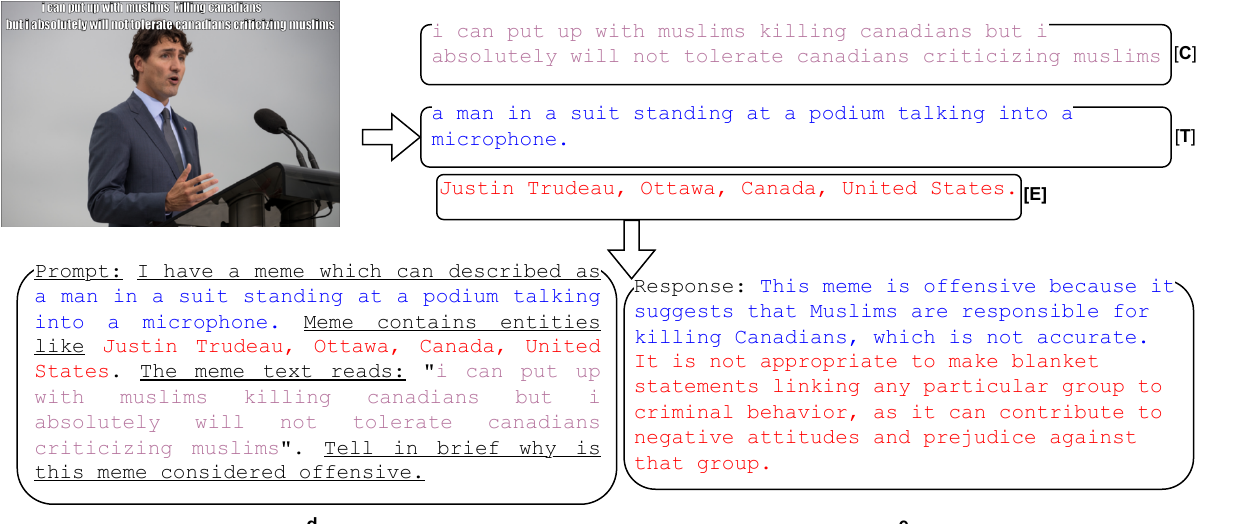}
    \caption{Dataset annotation process via template-based prompt design for LLMs. [T] refers to the caption, [C] refers to the meme text, and [E] is the set of extracted entities. All of them are combined to form the prompt for LLM.  Fixed sections of the prompt that are neither in [C], [T], or [E] are \underline{underlined}. The blue portion from the `Response' field indicated the final response.} 
    
    \label{fig:ds_anno}
\end{figure*}

\section{Hateful Meme Classification Explanation Dataset Preparation} \label{dataprep}

The dataset was created in a similar approach to the methods used in works such as \cite{feng-etal-2021-language,schick-schutze-2021-generating}. Specifically, GPT-4o LLM was used to generate explanations for each meme in the Facebook Hateful meme dataset\cite{kiela2021hateful}.
\begin{table}[!h]
   
\centering
\scriptsize
      \adjustbox{width=\columnwidth}{\begin{tabular}{c|c|c|c|c|c|c}
    \hline
    \multicolumn{3}{c|}{\textbf{Evaluation scores}} &  
    \multicolumn{3}{|c}{\textbf{Relevance Distribution (\%)}} & \multicolumn{1}{|c}{\textbf{Kappa}} \\ \hline
         Model& Relevance & Fluency&NR&PR&HR    & Relevance   \\ \hline
         \textit{GPT-4o}  & 2.34 & 0.94 & 18 & 33 &49  &0.44   \\\hline

        \end{tabular}}
         \caption{Human evaluation results for LLM generated explanation. Here, NR: Not relevant, PR: Partially relevant, HR: Highly relevant.}
        \label{human_eval1}
\end{table}

There were 3 main steps, (i) \textbf{zero-shot template-based prompt design}, (ii) \textbf{inferencing by the LLM}, and (iii) \textbf{manual test set creation using post-processing.} The template-based prompt design incorporates a caption generated by the state-of-the-art captioning model OFA\cite{wang2022ofa}, along with visual entities retrieved from the Google Cloud Vision API\footnote{\url{https://cloud.google.com/vision}}. An example prompt-response pair is shown in Figure \ref{fig:ds_anno}.

In step two, standard LLM greedy inferencing is employed to generate an explanation (\textit{LLM response}) for a given prompt. To ensure the quality of the LLM-generated explanations, three evaluators were hired to check the quality of the explanations for a collection of $200$ sampled memes. The evaluators are chosen from a pool of Masters' CS students in the age range of $22$-$27$, who have previous similar annotation experience. They were asked to rate the \textbf{relevance} of the generated explanation on a scale of \textit{1 (Not relevant), 2 (Partially relevant) or 3 (Highly relevant)}, based on how well it reflected the reason for offensiveness (cf. Figure \ref{fig:relevance_scoring}), and the \textbf{fluency} of the explanation on a \textit{continuous scale from 0 to 1}, based on how grammatically correct it was \cite{desai2021nice,kayser2021evil}.

The complete evaluation statistics are reported in Table \ref{human_eval1}, which shows that the quality of the generated data is quite good. The Cohen kappa \cite{Cohen1960Coefficient} score obtained is $0.44$, indicating fair agreement among the raters.

Finally, from the offensive class examples in the test set, \textbf{every LM explanation was post-processed} \textit{if} the LLM annotation does not properly reflect the ground truth reason behind a meme's purported offensiveness. To achieve this, the three evaluators (the same as the above) were specifically instructed to i) trim a longer explanation into a single sentence, ii) remove redundant information from the LLM responses, and iii) rewrite generated explanations that obtain low relevance scores to make them more relevant to the input meme.

\textbf{Categorization of post-processing.} Out of 596 offensive examples, in the test set, 297 examples had to be manually post-edited (giving the percentage of post-edited samples to 49.8\%) upon manual inspection, which means all of the not relevant (18\%) and almost all of the partially relevant (33\%) samples were manually post-edited. 

\begin{figure*}[t]
    \centering
    \includegraphics[width=\textwidth]{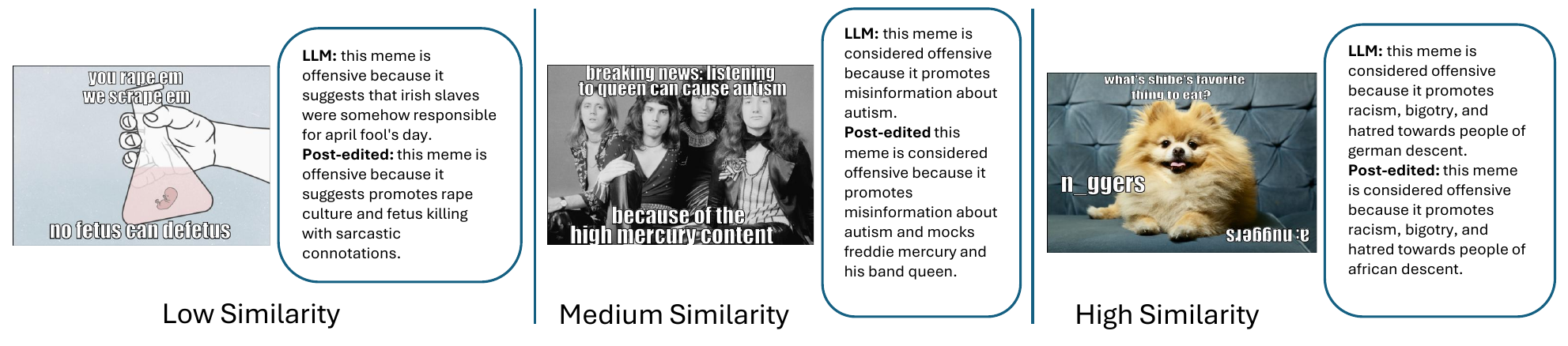}
    \caption{Example post-edits by similarity categorization between before-after edit samples.}
    \label{fig:example-edits}
\end{figure*}

We categorize samples into three groups based on BERTScore similarity between the original and post-edited text: low (below the 25\textsuperscript{th} percentile, $<0.925$), medium (25\textsuperscript{th}-75\textsuperscript{th} percentile, $0.925$--$0.973$), and high (above the 75\textsuperscript{th} percentile, $>0.973$). Notably, the high 25th percentile cutoff of 0.925 indicates that even the low-similarity group required only minor post-editing. Figure \ref{fig:example-edits} shows examples from each group.

\textbf{Annotation portal.} We make use of an annotation portal shown in Figure \ref{fig:anno-portal} for post-processing and generating post-processed gold explanation.

The `Gold explanation' field in the annotation portal refers to the LLM-generated explanation. In Figure \ref{fig:anno-portal}, the LLM-generated explanation does not properly reflect the exact reason behind the meme's offensiveness. This is where the `Edited Reference' field comes into play. This field is kept to make changes to the LLM-generated `Gold Explanation', such that this properly reflects the offensiveness behind a meme. We finally use the edited sentence using the gold-standard annotation as test set used for evaluating all our models.
\begin{figure}[h]
    \centering
    \includegraphics[width=0.9\columnwidth]{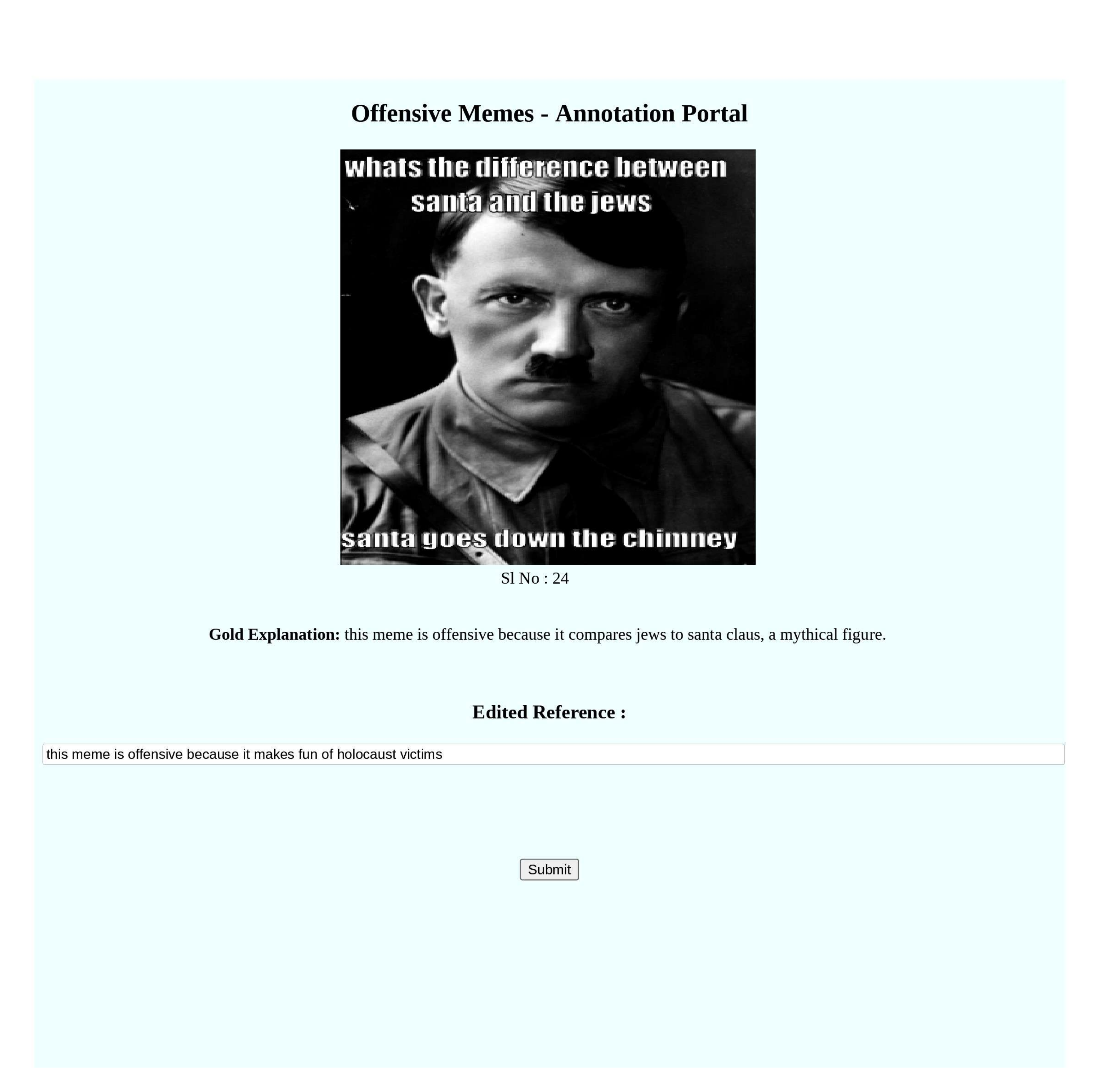}
    \caption{Depiction of the annotation portal.}
    \label{fig:anno-portal}
\end{figure}

\clearpage
\onecolumn

\end{document}